%% file: main.tex
\documentclass[11pt,letterpaper]{article}

\usepackage[utf8]{inputenc} 
\usepackage[T1]{fontenc}    
\usepackage{hyperref}       
\usepackage{url}            
\usepackage{booktabs}       
\usepackage{xcolor}
\usepackage{amsfonts}       
\usepackage{nicefrac}       
\usepackage{microtype}      
\usepackage{appendix}
\usepackage{amsthm}
\usepackage{amsmath}
\usepackage{amssymb}
\usepackage{enumerate}
\usepackage{algorithm}
\usepackage{algorithmic}
\usepackage{geometry}
\usepackage{setspace}
\usepackage{xspace}
\usepackage{mathabx}
\usepackage{graphicx}
\usepackage{times}
\usepackage{tabularx, makecell, booktabs, array, longtable}
\usepackage{natbib}
\usepackage{verbatim}

\input{_includes}

\makeatletter
\newif\ifhf@appendixtoc
\newcommand{\hf@appendixtocline}[2]{%
  \addtocontents{atoc}{\protect\contentsline{#1}{#2}{\thepage}{\@currentHref}\protected@file@percent}%
}
\newcommand{\hf@maybeappendixtocline}[2]{%
  \def\hf@entrytype{#1}%
  \def\hf@sectiontype{section}%
  \def\hf@subsectiontype{subsection}%
  \def\hf@subsubsectiontype{subsubsection}%
  \ifx\hf@entrytype\hf@sectiontype
    \hf@appendixtocline{#1}{#2}%
  \else\ifx\hf@entrytype\hf@subsectiontype
    \hf@appendixtocline{#1}{#2}%
  \else\ifx\hf@entrytype\hf@subsubsectiontype
    \hf@appendixtocline{#1}{#2}%
  \fi\fi\fi
}
\let\hf@oldaddcontentsline\addcontentsline
\renewcommand{\addcontentsline}[3]{%
  \hf@oldaddcontentsline{#1}{#2}{#3}%
  \ifhf@appendixtoc
    \def\hf@contentsfile{#1}%
    \def\hf@tocfile{toc}%
    \ifx\hf@contentsfile\hf@tocfile
      \hf@maybeappendixtocline{#2}{#3}%
    \fi
  \fi
}

\newcommand{\startappendixcontents}{\hf@appendixtoctrue}
\newcommand{\appendixtableofcontents}{%
  \section*{\contentsname}%
  \@starttoc{atoc}%
}
\makeatother

\title{Efficient Sequential Calibration with 
$O(T^{2/3-\epsilon})$ Error Bound}
\author{Zihan Zhang \\ Department of CSE, HKUST \\ zihanz@cse.ust.hk}
\date{}

\begin{document}
\maketitle
\thispagestyle{empty}

\begin{abstract}
We study the online binary sequential calibration problem. A recent 
breakthrough by \citet{dagan2024breaking} overcomes the
classical \(T^{2/3}\) barrier for calibration error. Building on this result,  we present an efficient
randomized forecaster that achieves an
expected calibration error \(O(T^{2/3-\varepsilon})\) for some constant
\(\varepsilon>0\). 

Our forecaster combines the \textsc{SPR-Calibration} procedure  \citep{dagan2024breaking} with an outer Blackwell-style correction
layer. The \textsc{SPR-Calibration} procedure controls calibration with respect to a surrogate
sequence of conditional-mean estimates, while the correction layer controls the
additional error incurred when these surrogates are used to approximate the true outcomes. The analysis decomposes the total calibration error into the
surrogate calibration error and the residual discrepancy between the surrogate
sequence and the true outcomes. The former is bounded by the \textsc{SPR-Calibration} guarantee in \citet{dagan2024breaking}, and
the latter is controlled using a quadratic potential argument together with the
sparsity of the \textsc{SPR-Calibration} forecaster. 
\end{abstract}


\input{intro}

\input{setting}

\input{SPR}
\input{algorithm}

\input{analysis}

\input{discussion}

\bibliography{ref}
\bibliographystyle{apalike}

\end{document}

%% file: _includes.tex
\usepackage{amsthm, amsmath, amssymb, mathabx}
\allowdisplaybreaks 
\usepackage{graphicx}
\usepackage{enumerate}
\usepackage{pifont}
\usepackage{booktabs}
\usepackage{multirow}
\usepackage{multicol}
\usepackage{stfloats}
\usepackage{xspace}
\usepackage{thmtools}
\usepackage{thm-restate}
\usepackage{hyperref}
\usepackage{cleveref}
\crefname{appendix}{appendix}{appendices}
\Crefname{appendix}{Appendix}{Appendices}
\crefname{subappendix}{appendix}{appendices}
\Crefname{subappendix}{Appendix}{Appendices}
\crefname{subsubappendix}{appendix}{appendices}
\Crefname{subsubappendix}{Appendix}{Appendices}
\makeatletter
\let\hf@oldappendix\appendix
\renewcommand{\appendix}{%
  \hf@oldappendix
  \crefalias{section}{appendix}%
  \crefalias{subsection}{subappendix}%
  \crefalias{subsubsection}{subsubappendix}%
}
\makeatother
\crefname{ALC@unique}{line}{lines}
\Crefname{ALC@unique}{Line}{Lines}
\makeatletter
\renewcommand{\theALC@unique}{\arabic{ALC@line}}
\renewcommand{\theHALC@unique}{\arabic{ALC@unique}}
\makeatother
\usepackage{anyfontsize} 
\usepackage{makecell}
\usepackage{hhline}
\usepackage{colortbl}
\usepackage{xpatch}
\usepackage{scalerel,stackengine}
\usepackage{sidecap}

\makeatletter
\def\@fnsymbol#1{\ensuremath{\ifcase#1\or *\or \dagger\or \ddagger\or
  \mathsection\or \mathparagraph\or \|\or \diamond \or **\or \dagger\dagger
  \or \ddagger\ddagger \else\@ctrerr\fi}}
\makeatother

\makeatletter
\newcommand{\printfnsymbol}[1]{%
  \textsuperscript{\@fnsymbol{#1}}%
}
\makeatother

\newtheorem{theorem}{Theorem}
\newtheorem{lemma}{Lemma}

\newtheorem{definition}{Definition}

\crefname{condition}{Condition}{Conditions}

\crefname{assumption}{Assumption}{Assumptions}

\theoremstyle{definition}

\newcommand{\abs}[1]{\left| #1 \right|}

\newcommand{\Roma}[1]{\uppercase\expandafter{\romannumeral#1}}

\usepackage[colorinlistoftodos, textwidth=18mm]{todonotes}

\def\shownotes{1}
\ifnum\shownotes=0
\newcommand{\todorz}[1]{}
\newcommand{\todorzout}[1]{}
\newcommand{\todossdout}[1]{}
\newcommand{\todossd}[1]{}
\else
\newcommand{\todorz}[1]{\todo[color=blue!10, inline]{\small RZ: #1}}
\newcommand{\todorzout}[1]{\todo[color=blue!10]{\scriptsize RZ: #1}}
\newcommand{\todossdout}[1]{\todo[color=red!10]{\scriptsize SSD: #1}}
\newcommand{\todossd}[1]{\todo[color=red!10, inline]{\small SSD: #1}}
\fi

%% file: intro.tex
\section{Introduction}\label{sec:intro}

Calibration captures a basic reliability requirement for probabilistic
predictions: when a forecaster repeatedly assigns probability \(p\) to an
event, the event should occur on roughly a \(p\)-fraction of those rounds. This
notion is important because probability forecasts are often used directly in
downstream decisions, such as risk assessment, medical prediction, weather
forecasting, and machine-learning systems that report confidence scores
\citep{dawid1982wellcalibrated,guo2017calibration,kuleshov2018accurate,hebert2018multicalibration}.
In these applications, the numerical value of a forecast matters, not only the
ranking of alternatives: a prediction of \(0.8\) should be interpretable as an
event that happens about \(80\%\) of the time. Sequential calibration studies
how to achieve this reliability guarantee when predictions are made online and
the data may arrive adaptively or non-stationarily.

In this work, we consider the online sequential calibration problem initiated by \citet{foster1998asymptotic}.
At each round $t$, a forecaster announces a probability \(p_t\in[0,1]\) for a
binary outcome \(y_t\in\{0,1\}\).
The standard
\(\ell_1\)-calibration error after \(T\) rounds is
\[
\operatorname{CalErr}_T
=
\sum_{p\in\mathcal P_T}
\left|
\sum_{t\le T:p_t=p}(p-y_t)
\right|,
\]
where \(\mathcal P_T\) is the set of distinct predictions used by the
forecaster. Classical algorithms achieve calibration error of order
\(T^{2/3}\). One way to obtain this rate is through a discretization and
Blackwell-approachability argument \citep{foster1999blackwell}: the forecaster restricts predictions to a
finite grid of size \(m\), controls the calibration residuals over this grid
using an approachability strategy, and balances the approachability term
\(O(\sqrt{mT})\) with the discretization term \(O(T/m)\). Optimizing over
\(m\) gives \(m\asymp T^{1/3}\), and hence calibration error
\(O(T^{2/3})\). This approach is conceptually clean and computationally
efficient, but it also highlights the longstanding \(O(T^{2/3})\) barrier.

A recent breakthrough by \citet{dagan2024breaking} showed that the classical
\(O(T^{2/3})\) upper bound is not intrinsic. They introduced an algorithm, which
we refer to as \textsc{SPR-Calibration}, that achieves a calibration error bound
of order \( O(T^{2/3-\varepsilon})\) for some constant \(\varepsilon>0\), thereby
breaking the \(T^{2/3}\) barrier for the first time.

Despite this breakthrough, obtaining an efficient \( O(T^{2/3-\varepsilon})\)
guarantee in the sequential setting is not immediate.
The \textsc{SPR-Calibration} algorithm  \citep{dagan2024breaking} is analyzed
through a minimax reduction that allows the proof to work in a full-information
model, where the forecaster is given the conditional mean
\(e_t=\mathbb E[y_t\mid \mathcal F_{t-1}]\) at each round. This idea goes back to
the minimax proof of calibration~\citep{hart2022calibrated}: if the forecaster
knew the adversary's mixed strategy, then it could simply predict the induced
conditional probability of the next outcome; the minimax theorem then converts
this observation into the existence of a randomized forecasting strategy that is
calibrated against every adversary. However, this minimax transformation is primarily an existence argument rather than a computationally tractable algorithm. 

It then naturally raises the question:
\begin{center}
\emph{\textbf{Is there  an efficient algorithm with $O(T^{2/3-\varepsilon})$ calibration error?  }}
\end{center}

In this work, we answer this problem affirmatively.

\begin{theorem}\label{thm:main}
There exists a forecaster (see Algorithm~\ref{alg:main}), such that for any  sequence $\{y_t\}_{t=1}^T\in \{0,1\}^T$, the expected calibration error $\mathbb{E}[\operatorname{CalErr}_T]$ is bounded by $O(\log_2(T)\cdot T^{2/3 - \varepsilon_{\mathrm{AB}}/18 })$,
where $\varepsilon_{\mathrm{AB}}>0$ is the same as the constant $\varepsilon$ in Theorem 1.3 of \citet{dagan2024breaking}. Moreover, the computation cost of the algorithm is $O(T^{7/2}\log_2(T))$.
\end{theorem}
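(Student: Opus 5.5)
We plan to realize the decomposition announced in the abstract. The forecaster runs an inner copy of \textsc{SPR-Calibration} that is fed, at each round $t$, a surrogate $\hat e_t\in[0,1]$ in place of the unknown conditional mean $e_t$. The surrogate is produced by an outer Blackwell-style correction layer, built on a grid of size $m$, that is computable at round $t$ from the past. The prediction is $p_t$, the (randomized) output of the inner procedure. We split the error as
\[
\operatorname{CalErr}_T \le \sum_{p\in\mathcal P_T}\Bigl|\sum_{t:p_t=p}(p-\hat e_t)\Bigr| + \sum_{p\in\mathcal P_T}\Bigl|\sum_{t:p_t=p}(\hat e_t-y_t)\Bigr| =: A_T + B_T .
\]
The term $A_T$ is exactly what Theorem 1.3 of \citet{dagan2024breaking} controls in the full-information model, where the conditional means are revealed. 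Their guarantee holds for any adaptively chosen sequence of revealed means, since its adversary is arbitrary. Applying it with $\hat e_t$ in the role of $e_t$ should give $\mathbb E[A_T]=O(T^{2/3-\varepsilon_{\mathrm{AB}}})$, possibly after restarting on dyadic blocks, which would account for the $\log_2 T$ factor.

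The term $B_T$ is the crux. We need $\hat e_t$ chosen so that, for every value $p$ the inner forecaster will use, the residuals $\sum_{t:p_t=p}(\hat e_t-y_t)$ are small. The inner forecaster's distribution over $p_t$ depends on $\hat e_t$, so we treat this as a fixed-point or approachability problem. Let $R_{t-1}(p)=\sum_{s<t:p_s=p}(\hat e_s-y_s)$ and use the quadratic potential $\Phi_t=\sum_p R_t(p)^2$. We would restrict $\hat e_t$ to a grid of size $m$. For each candidate grid value $e$, we compute the inner distribution $\pi_t(\cdot\mid e)$, and we pick a randomized $\hat e_t$ that makes the cross term $\sum_p \pi_t(p\mid \hat e_t)R_{t-1}(p)(\hat e_t-y)$ nonpositive up to $O(1/m)\cdot\sum_p\pi_t(p)|R_{t-1}(p)|$ for both $y\in\{0,1\}$. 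This is the Foster--Vohra style fixed-point step. Because the function $e\mapsto \sum_p\pi_t(p\mid e)R_{t-1}(p)(e-y)$ changes sign across the grid, we can find adjacent grid points and mix them.

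Summing the potential increments then gives $\mathbb E[\Phi_T]\lesssim T+ (T/m)\max|R|$. By Cauchy--Schwarz, $B_T\le\sqrt{|\mathcal P_T|\,\Phi_T}$, so the sparsity of \textsc{SPR-Calibration}, meaning that it uses only $|\mathcal P_T|\le T^{\beta}$ distinct values, is essential here. Its precise exponent from \citet{dagan2024breaking} would be plugged in.

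The main obstacle is this step. First, the discretization slack has to be balanced against the sparsity. Second, the inner procedure is not Lipschitz in $\hat e_t$, which makes the fixed-point/mixing argument delicate. I would try to absorb the jump at the mixed grid points into the $O(1/m)$ term, using that mixing two adjacent surrogates changes $\hat e_t$ by at most $1/m$ while the potential argument only needs the expected cross term.

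Optimizing $m$ and the block length against $T^{2/3-\varepsilon_{\mathrm{AB}}}$ and the sparsity exponent yields the loss to $\varepsilon_{\mathrm{AB}}/18$. For the running time, there are $m$ candidate surrogates per round, each requiring an evaluation of the inner distribution. The total cost of the inner \textsc{SPR-Calibration} and the potential bookkeeping should give $O(T^{7/2}\log_2 T)$.
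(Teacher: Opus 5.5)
Your architecture matches the paper's: the surrogate-bias term plus an outer residual term, a grid-based Blackwell step on the quadratic potential $\Phi_t$, and Cauchy--Schwarz against the sparsity of $\mathcal P_T$. However, the step that produces the stated rate is missing, and the mechanism you propose for it is not the right one.

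\textbf{Where the exponent comes from.} It does not come from trading grid size or block length against sparsity. There are no restarts. The grid is simply fixed at $\eta=T^{-1/2}$, because for $\eta\le T^{-1/2}$ the potential is already $O(T)$ and a finer grid gains nothing. The trade-off is in the internal scale parameter $h$ of \textsc{SPR-Calibration}, which controls both terms at once:
\begin{itemize}
\item For an arbitrary input sequence, the bias is deterministically $O\bigl(\log T\,(2^{h+\gamma(\tau-h)}+2^{(\tau-h)/2})\bigr)$ with $\gamma=\frac{1-\varepsilon_{\mathrm{AB}}}{2-\varepsilon_{\mathrm{AB}}}$. This comes from $\operatorname{Val}^{\mathrm{red}}_{\mathrm{AB}}(n,s)\le\min\{n,s,C_{\mathrm{AB}}n^\alpha s^\beta\}\le C'(ns)^\gamma$.
\item The sparsity $\abs{\mathcal P_T}=O(2^{(\tau-h)/2})$ makes the residual term $O(\sqrt T\,2^{(\tau-h)/4})\le O(2^{3\tau/4-h/4})$.
\end{itemize}
Taking $h=\frac{2+\varepsilon_{\mathrm{AB}}}{6-\varepsilon_{\mathrm{AB}}}\tau$ gives $O(\log T\cdot T^{2/3-\varepsilon_{\mathrm{AB}}/18})$. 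The $\log T$ comes from summing over $O(\tau)$ scales, not from restarts.

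\textbf{Why the black-box citation does not suffice.} Citing the full-information guarantee for $A_T$ cannot supply this. That guarantee concerns outcomes whose conditional mean is the revealed value, and $\widetilde e_t$ is not the conditional mean of $y_t$. Fictitious outcomes $y'_t\sim\mathrm{Bernoulli}(\widetilde e_t)$ plus Jensen would repair this, since the inner procedure never reads outcomes. More importantly, any final-rate statement fixes $h$, while the sparsity you want to plug into $B_T$ depends on that same $h$. You need the $h$-parameterized, sequence-wise bias bound (Lemma~\ref{lemma:SPR4}) and an explicit optimization over $h$. Your last paragraph asserts $\varepsilon_{\mathrm{AB}}/18$ without any such computation.

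\textbf{Closing the potential bound.} Your estimate $\mathbb E\Phi_T\lesssim T+(T/m)\max\abs{R}$ has to be closed by self-bounding. In the paper's notation $\widehat R_{x,t-1}=R_{D_t(x),t-1}$, the per-round slack is at most $\eta\abs{\widehat R_{x,t-1}}\le\eta\sqrt{\Phi_{t-1}}$. So $Z_t:=\mathbb E\Phi_t$ satisfies $Z_t\le Z_{t-1}+1+2\eta\sqrt{Z_{t-1}}$ by Jensen, and the resulting quadratic inequality gives $Z_T=O((\sqrt T+\eta T)^2)$.

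With the trivial bound $\max\abs{R}\le T$ you would instead need $m\gtrsim T$ grid points. The running-time accounting is: for each grid preview in each round, one deep copy of the full state (including the history kept for the reduced-transcript rollback) plus an $O(\tau2^{\tau-h})$ step. With $m\gtrsim T$ this gives $O(T^4\log T)$, not the claimed $O(T^{7/2}\log T)$, which relies on an $O(\sqrt T)$-point grid.

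\textbf{The non-Lipschitz issue is not an obstacle.} The inner procedure is deterministic, so each grid point $x$ has a single preview $D_t(x)$. By the minimax theorem, the value of the program is $\max_{y\in[0,1]}\min_{x}\widehat R_{x,t-1}(y-x)\le\eta\abs{\widehat R_{x^*(y),t-1}}$, where $x^*(y)$ is the grid point nearest $y$. No regularity of $x\mapsto D_t(x)$ is used. In your mixing language, the sign change to exploit is that of the $y$-independent map $x\mapsto\widehat R_{x,t-1}$, not of the $y$-dependent cross term, since a single mixture must serve both outcomes.
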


Our algorithm is based a natural combination of the
\textsc{SPR-Calibration} procedure of~\citet{dagan2024breaking} with an outer
Blackwell-style correction layer.   At each round \(t\), the algorithm first forms a surrogate estimate
\(\widetilde e_t\) of the conditional mean and then passes this value to the
calibration procedure of~\citet{dagan2024breaking}. Since calibration is ultimately
measured against the realized outcome \(y_t\), using the surrogate
\(\widetilde e_t\) introduces an additional source of error. To control this
error, the algorithm augments the underlying calibration procedure with a
Blackwell-style correction layer. The analysis separates the total calibration
error into two components: the calibration error with respect to the surrogate
sequence \(\{\widetilde e_t\}_{t=1}^T\), and the discrepancy between the surrogate
means and the realized outcomes. The first component is controlled directly by
the guarantee of~\citet{dagan2024breaking}; the second is bounded using a
quadratic potential argument together with the sparsity property of their
calibration algorithm.

\subsection{Related Works}\label{sec:rel}

\paragraph{Calibration and adversarial forecasting.}
Calibration has a long history as a criterion for evaluating probabilistic
forecasts. \citet{dawid1982wellcalibrated} emphasized calibration as a
basic consistency requirement for subjective probabilities, and \citet{foster1998asymptotic} initiated the adversarial sequential
calibration problem studied in this paper. Their work showed that randomized
forecasters can be calibrated against arbitrary binary outcome sequences and
gave the classical \(O(T^{2/3})\) calibration-error guarantee. Several
subsequent works gave alternative proofs and perspectives on calibration,
including the myopic minimax construction \citep{fudenberg1999easier}, the Blackwell-approachability proof 
\citep{foster1999blackwell}, the minimax proof 
\citep{hart2022calibrated}, and the geometric approachability proof 
\citep{mannor2010geometric}.

\paragraph{Approachability, regret, and calibration.}
Blackwell's approachability theorem~\citep{blackwell1956analog} is one of the
central tools behind adversarial calibration. In the approachability
formulation, calibration residuals are treated as coordinates of a
vector-valued payoff, and the forecaster chooses predictions so that the
cumulative payoff approaches an appropriate target set. This connection was
made explicit by \citet{foster1999blackwell} and was further developed
through geometric and online-learning viewpoints~\citep{mannor2010geometric,
abernethy2011blackwell}. Calibration is also closely related to regret
minimization: \citet{foster1999regret} connected calibration
and internal regret, and calibrated learning rules are known to lead to
correlated equilibrium in repeated games~\citep{foster1997calibrated}. The
broader connections between prediction of individual sequences, regret, and
game-theoretic learning are surveyed in~\citet{cesa-bianchi2006prediction}.

\paragraph{Rates for sequential calibration.}
The optimal rate of \(\ell_1\)-calibration error has been a central question in
sequential calibration. The classical upper bound is \(O(T^{2/3})\), obtained
by balancing the discretization error of a finite grid with an
approachability-type residual term~\citep{foster1998asymptotic,
abernethy2011blackwell}. For many years, the only general lower bound was the
trivial \(\Omega(\sqrt T)\) bound obtained from independent fair coin flips.
\citet{qiao2021stronger} gave the first super-\(\sqrt T\)
lower bound, proving an \(\Omega(T^{0.528})\) lower bound via the
sign-preservation game. \citet{dagan2024breaking} recently broke
the \(T^{2/3}\) upper-bound barrier by introducing \emph{sign preservation with reuse} 
(\textsc{SPR}), proving that improved \textsc{SPR} strategies yield calibration error
\(O(T^{2/3-\varepsilon})\) for some constant \(\varepsilon>0\). They also
improved the lower bound to \(\Omega(T^{0.54389})\), leaving a gap between the
known upper and lower exponents.

%% file: setting.tex
\section{Problem Setting}\label{sec:p_setting}

We follow the sequential calibration setup of \citet{dagan2024breaking}.
Fix a time horizon \(T\). At each round \(t\in[T]\), the forecaster
outputs a probability prediction \(p_t\in[0,1]\), and the environment outputs an
outcome \(y_t\in\{0,1\}\). The prediction \(p_t\) is interpreted as the
forecaster's announced probability that \(y_t=1\).

The adversary may be adaptive, in the sense that its choice at time \(t\)
may depend on the past history
\[
H_t:=\{(p_s,y_s):s<t\},
\]
but it does not observe the current prediction \(p_t\) before choosing
\(y_t\).

Let $
\mathcal P_t:=\{p_s:s\le t\}$ 
denote the set of distinct probability values actually output up to time
\(t\). Define  $
E_t(p)
=
\sum_{s\le t:p_s=p}(p-y_s).$
The cumulative \(\ell_1\)-calibration error at time \(t\) is
\[
\operatorname{CalErr}_t
:=
\sum_{p\in\mathcal P_t}|E_t(p)|
=
\sum_{p\in\mathcal P_t}
\left|
\sum_{s\le t:p_s=p}(p-y_s)
\right|.
\]

When the forecaster is randomized, all quantities above are random variables.
We measure performance by expected calibration error $
\mathbb E[\operatorname{CalErr}_T]$, 
where the expectation is over the forecaster's internal randomness.

We next recall the sign-preservation with reuse game, abbreviated as
\textsc{SPR}, introduced in \citet{dagan2024breaking}. 
\begin{definition}[Sign-preservation with reuse]
\label{def:spr-game}
For \(n,s\in\mathbb N\), the game
\(\operatorname{SPR}(n,s)\) is played between two players, called
Player-P and Player-L. The game consists of \(n\) cells, indexed by
\([n]=\{1,\ldots,n\}\), all of which are initially empty. The game lasts for at
most \(s\) rounds. In each round, the following steps occur:
\begin{enumerate}
    \item Player-P may terminate the game. Otherwise, Player-P chooses an empty
    cell \(j\in[n]\).

    \item After observing \(j\), Player-L may remove any subset of the
    \(-\) signs in cells strictly to the left of \(j\), and any subset of the
    \(+\) signs in cells strictly to the right of \(j\).

    \item Player-L then places either a \(+\) sign or a \(-\) sign in cell
    \(j\).
\end{enumerate}
A cell whose sign has been removed becomes empty and may therefore be chosen
again in a later round. Player-P aims to maximize the number of signs remaining
on the board at the end of the game, while Player-L aims to minimize this
quantity.
\end{definition}

%% file: SPR.tex
\section{The \textsc{SPR-Calibration} Algorithm}\label{sec:boundref}

In this section, we recall the \textsc{SPR-Calibration} construction of \cite{dagan2024breaking} and the properties needed in our analysis. 

\input{SPR_alg}

\subsection{ Theoretical Guarantees of \textsc{SPR-Calibration}}\label{sec:spreb}
We recall several properties of \textsc{SPR-Calibration} from
\cite{dagan2024breaking}.  The half-open interval convention used in
our presentation only makes the assignment of dyadic boundary points
explicit.  It preserves the containment, ordering, and dyadic counting
properties used in the cited proofs, and therefore does not affect the
following bounds.

Let \(\mathsf L^{\mathrm{AB}}\) be the deterministic Player--L
strategy implemented by Algorithms~\ref{alg:dagan-A}
and~\ref{alg:dagan-B}, with the root \(A\)-instance initialized on
the \(n\)-cell board with bias parameter \(b=0\).
 We write $\mathsf{SimAB}_n$ for
the composite transition implemented by \texttt{simulateGame}: on a
legal query to an empty cell, it first performs the \textsc{SPR}
sign-removal step, then advances the $A/B$ labeling state, and finally
places the sign returned by that state.

A \emph{full transcript} of this composite process is a finite sequence $
    H=\bigl((t_r,c_r,\sigma_r)\bigr)_{r=1}^{m}$, 
where $t_1<\cdots<t_m$ are the ambient calibration rounds,
$c_r\in[n]$ is empty immediately before the $r$-th call, and
$\sigma_r\in\{+,-\}$ is the sign returned by the evolving $A/B$ state.

For a signed call $(c,\sigma)$ and a later queried cell $c'$, define
\[
    \operatorname{kill}\bigl((c,\sigma),c'\bigr)
    :=
    \begin{cases}
        1,
        & \sigma=- \text{ and } c<c',\\
        1,
        & \sigma=+ \text{ and } c>c',\\
        0,
        & \text{otherwise}.
    \end{cases}
\]

Thus,
$\operatorname{kill}((c,\sigma),c')=1$ when the sign-removal step of a call to $c'$ erases the sign
$\sigma$ previously placed at $c$.

\begin{definition}[Reduced transcript]
\label{def:reduced-transcript}
Set $R_0:=\emptyset$.  Having constructed $R_{r-1}$, repeatedly delete the
last entry $(t,c,\sigma)$ of the current list while $
    \operatorname{kill}\bigl((c,\sigma),c_r\bigr)=1$,
and then append $(t_r,c_r,\sigma_r)$.  Let $    \operatorname{Red}(H):=R_m$ and $
    \operatorname{rlen}(H)
    :=\abs{\operatorname{Red}(H)}.$
The signs in $\operatorname{Red}(H)$ are inherited from the full transcript. This is a deterministic realization
of the adjacent-deletion reduction used in the proof of Lemma~A.2 of
\cite{dagan2024breaking}.
\end{definition}

By recording the \textsc{SPR} state associated with every prefix of the
reduced transcript, the A/B labeling procedure on reduced transcripts is
deterministic and well defined. More precisely, at step $\tau$, let
$R_{\tau-1}$ be the current reduced transcript and let $c_\tau$ be the incoming
cell. We initialize a temporary transcript $\widetilde{R}\gets R_{\tau-1}$.
When $\widetilde{R}$ is nonempty and its last entry $(t,c,\sigma)$ is killed by
the request $c_\tau$, we remove $(t,c,\sigma)$ from $\widetilde{R}$ and restore
the state associated with the shortened transcript. If
$\widetilde{R}$ becomes empty, we restore the initial state. Once this
deletion-and-rollback procedure terminates, either $\widetilde{R}$ is empty or
its last entry is not killed by $c_\tau$. We then query the Player--L strategy
at $c_\tau$ using the restored state, obtaining a label $\sigma_\tau$,
and define $
    R_\tau
    :=
    \widetilde{R}\mathbin{\|}(t_\tau,c_\tau,\sigma_\tau)$, 
where $\mathbin{\|}$ denotes concatenation. Finally, we record the state
resulting from this query as the state associated with the new reduced
transcript $R_\tau$.

Let $\operatorname{Surv}(H)$ be the number of occupied cells on the
actual board after executing the full transcript $H$.

\begin{definition}[
Reduced-execution value of the composite $A/B$ procedure]
\label{def:ValABred}
For $n,m,s\in\mathbb N$, define the horizon-restricted value
\[
    \operatorname{Val}^{\mathrm{red}}_{\mathrm{AB}}(n;m,s)
    :=
    \max_H \operatorname{Surv}(H),
\]
where the maximum is over all full legal transcripts $H$ generated by
$\mathsf{SimAB}_n$ against arbitrary adaptive choices of Player--P,
subject to $
    \abs{H}\le m$ and $
    \operatorname{rlen}(H)\le s$. 
We also define the uniform reduced-execution value
\[
    \operatorname{Val}^{\mathrm{red}}_{\mathrm{AB}}(n,s)
    :=
    \sup_{1\leq m \leq T}
    \operatorname{Val}^{\mathrm{red}}_{\mathrm{AB}}(n;m,s).
\]
\end{definition}

\begin{lemma}\label{lemma:SPR1} 
For any input sequence \(z_1,\ldots,z_T\in[0,1]\), if
\textsc{SPR-Calibration} is run with input $\{z_t\}_{t=1}^T$, then the bias term satisfies
\begin{align}
& \sum_{p\in\mathcal P_T}
\left|
\sum_{t\in [T]: p_t = p}(z_t-p_t)
\right|\le O\!\left(
\sum_{i=1}^{\tau-h}
\sum_{j= i+1}^{i+h}
2^{j-i}
\operatorname{Val}_{\mathrm{AB}}^{\mathrm{red}}(2^i,2^{\tau-j+1}) + \sum_{i=1}^{\tau-h} \sum_{j = i+1}^{i+h}\min\{ 2^i ,2^{\tau-h-i}\}
\right).\nonumber
\end{align}
\end{lemma}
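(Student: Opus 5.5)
The bound will follow the architecture of the proof of Theorem~1.3 in \citet{dagan2024breaking}, with the full-information conditional means replaced by the arbitrary inputs \(z_t\in[0,1]\). The key observation is that \textsc{SPR-Calibration} only ever uses the inputs \(z_t\) to route each round to a node of the dyadic tree. It assigns \(z_t\) to a half-open dyadic interval at each level and decides, through the \(A/B\) game states, which prediction \(p_t\) to output. Hence the bias \(\sum_t (z_t-p_t)\) on the rounds with \(p_t=p\) depends only on this combinatorial routing, not on any stochastic relation between \(z_t\) and \(y_t\). This is exactly why the lemma holds for every input sequence. The plan is to decompose the bias along the tree levels \(i=1,\dots,\tau-h\) and, within each level, along the \(h\) finer levels \(j=i+1,\dots,i+h\) that the algorithm consults. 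This mirrors the double sum in the statement.

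\emph{Step 1 (discretization split).} For each prediction value \(p\), which sits at some level-\(i\) node, write \(z_t-p_t\) as a telescoping sum of the offsets between the midpoints of the nested dyadic intervals containing \(z_t\) at levels \(i+1,\dots,i+h\), plus a remainder of size at most \(2^{-(i+h)}\). The remainder contributes \(2^{-(i+h)}\) per round routed to level \(i\). Using the dyadic counting property that a level-\(i\) node receives at most \(2^{\tau-i}\)-type many rounds, this gives the terms \(\min\{2^i,2^{\tau-h-i}\}\). Here the \(2^i\) branch is the trivial bound from the number of cells, and the summation over \(j\) only costs the factor \(h\) that is implicit in the double sum.

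\emph{Step 2 (sign-game reduction).} Fix \(i\) and \(j\). On each level-\(i\) board with \(2^i\) cells, every round routed there whose level-\(j\) offset is nonzero corresponds to a query of Player--P to a cell. The sign placed by \(\mathsf L^{\mathrm{AB}}\) records the direction of the offset, of magnitude \(2^{-j}\) times the relevant count scale. The sign-removal rule is exactly the cancellation that happens when a later input on the opposite side of a cell re-balances the bias of that cell. Unremoved signs are therefore the only uncancelled contributions. I would show that the number of surviving signs bounds the level-\((i,j)\) bias, up to the scaling \(2^{j-i}\). I would then bound the number of surviving signs by \(\operatorname{Surv}(H)\) for the induced transcript \(H\).

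\emph{Step 3 (reduced length).} By Definition~\ref{def:reduced-transcript}, I need \(\operatorname{rlen}(H)\le 2^{\tau-j+1}\). The argument is that each entry of the reduced transcript is an unkilled level-\(j\) subinterval query. Such entries must be supported by disjoint blocks of roughly \(2^{j}\) rounds, out of \(T=2^\tau\) rounds in total, which yields the bound. With this, Definition~\ref{def:ValABred} gives \(\operatorname{Surv}(H)\le \operatorname{Val}^{\mathrm{red}}_{\mathrm{AB}}(2^i,2^{\tau-j+1})\). Summing over \(i\) and \(j\) gives the claim.

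\emph{Main obstacle.} I expect Step~3 to be the hardest part: justifying that the deterministic adjacent-deletion reduction, together with the state rollback, coincides with the reduction in Lemma~A.2 of \citet{dagan2024breaking}. Under the half-open interval convention, it must still yield the length bound. I would first try to verify that the rollback procedure reproduces the same sign labels as the actual execution on the surviving entries. I would then invoke the counting argument from their lemma verbatim.
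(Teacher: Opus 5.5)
Your opening observation is right: the bound is purely combinatorial in the inputs. Your Steps 2--3 also have the same skeleton as the paper's proof, which is a direct citation. For each instance $G_{i,j,\ell}$, the bias is $O(2^{j-i}\cdot\#\{\text{surviving signs}\})$ plus an additive term (Lemmas A.1 and A.4 of \citet{dagan2024breaking}). The reduced transcript of that instance has length at most $2^{\tau-j+1}$ (Corollary A.3). The definition of $\operatorname{Val}^{\mathrm{red}}_{\mathrm{AB}}$ and a sum over $i,j,\ell$ then finish the proof.

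\textbf{Step 1 is a genuine gap.} It rests on a misreading of the algorithm: $j$ is a \emph{time scale}, not a finer spatial level. All boards $G_{i,j,\ell}$ with $j\in\mathcal J_i$ have the same cell set $C_i$, the same intervals of width $2^{-(i+1)}$, and the same $\operatorname{prob}$ values. The index $j$ enters only through the placement threshold $|\operatorname{bias}_G(c)|<2^{j-i}$. So there is no telescoping over nested intervals at levels $i+1,\dots,i+h$, and no small remainder to sum. By the definition of $\operatorname{prob}$, the forecast is pushed one interval width beyond the interval containing $z_t$, so $|z_t-p_t|$ is of order $2^{-i}$ in every placement round (away from the two boundary cells). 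Hence the additive term $\min\{2^i,2^{\tau-h-i}\}$ cannot come from discretization; the agreement $2^{-(i+h)}\cdot2^{\tau-i}\cdot2^i=2^{\tau-h-i}$ in your computation is a coincidence. In the paper it is the per-instance term of Lemma A.4: an $O(1)$ uncancelled bias per active cell, with active cells counted by the same dyadic argument as in Lemma~\ref{lemma:SPR2}. Also, a value $p$ is produced by all $h$ boards at level $i$. The accounting must therefore first pass, via the triangle inequality, to per-instance cell biases rather than being organized per forecast value.

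\textbf{Step 2 misdescribes the embedding of the game, and this matters for Step 3.} Not every round is a Player--P query. \texttt{simulateGame} is called only when the placement phase selects an \emph{empty} cell. Later placement rounds on that cell, and all bias-removal rounds, make no query. Each placement moves the bias by less than $2^{-i}$, and the cell keeps absorbing until its bias reaches $2^{j-i}$, so one query is backed by roughly $2^{j}$ rounds. That is the kind of counting your Step 3 relies on. Under your Step 2 description, the transcript could have length up to $T$.

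The sign is also not a record of a data-determined offset. It is Player--L's choice by $\mathsf L^{\mathrm{AB}}$, and it decides on which side the forecast is pushed, hence the sign of the bias the cell accumulates. Erasure is harmless for a structural reason. A call at cell $c$ occurs only after the bias-removal phase (with thresholds $\pm1$) found no $c'<c$ with bias $<-1$ and no $c'>c$ with bias $>1$ on that board. So erased signs leave only $O(1)$ bias, while surviving ones carry at most $2^{j-i}+O(1)$. This gives $O\bigl(2^{j-i}\operatorname{Surv}(H)+\#\{\text{active cells}\}\bigr)$ per instance. Replacing Step 1 with this accounting, and combining it with Corollary A.3 for the reduced length, yields the lemma as in the paper.
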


\begin{proof} 

By the argument of Corollary~A.3 in~\cite{dagan2024breaking}, the reduced
transcript makes at most \(2^{\tau-j+1}\) calls to
\texttt{simulateGame} for the \textsc{SPR} instance \(G_{i,j,\ell}\). Suppose
that Player--L follows the recursive \(A/B\) labeling strategy
\(\mathsf L^{\mathrm{AB}}\). Then, by Lemmas~A.1 and~A.4 of \cite{dagan2024breaking}, together with the
definition of
\(\operatorname{Val}_{\mathrm{AB}}^{\mathrm{red}}(n,s)\), the bias contribution
generated by the instance \(G_{i,j,\ell}\) is bounded by
\[
    O\!\left(
        2^{j-i}
        \operatorname{Val}^{\mathrm{red}}_{\mathrm{AB}}
        \!\left(2^i,2^{\tau-j+1}\right)
        +
        \min\{2^i,2^{\tau-h-i}\}
    \right).
\]
Summing this bound over $
    1\le i\le \tau-h$, $
    i+1\le j\le i+h$ and 
    $\ell\in\{0,1\}$
gives the claimed bound.

\end{proof}

\begin{lemma}\label{lemma:SPR2} It holds that
\[
|\mathcal{P}_T|\leq  \sum_{i=1}^{\tau}O\left( \min \{ 2^i, 2^{\tau-h-i}\}\right)
=
O\!\left(2^{(\tau-h)/2}\right).
\]
\end{lemma}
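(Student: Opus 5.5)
We plan to bound $|\mathcal P_T|$ by counting, level by level, the distinct prediction values that \textsc{SPR-Calibration} can ever output, and then sum a geometric series. The key structural fact we use from the construction in Algorithms~\ref{alg:dagan-A}--\ref{alg:dagan-B} and \cite{dagan2024breaking} is this. Every prediction $p_t$ is the center (or a designated representative point) of a cell of some \textsc{SPR} instance $G_{i,j,\ell}$ at dyadic level $i$. Moreover, the boards at level $i$ partition $[0,1]$ into $2^i$ half-open dyadic intervals. Hence the number of distinct values that can be emitted from level $i$ is at most $O(2^i)$, uniformly over $j$ and $\ell$. Here we use that the half-open convention assigns each boundary point to exactly one cell, so there is no double counting. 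The extra factor coming from $j\in\{i+1,\dots,i+h\}$ and $\ell\in\{0,1\}$ is not incurred, because the candidate prediction values depend only on the level $i$ and the cell index.

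Next we establish the second term of the minimum, $O(2^{\tau-h-i})$. We plan to argue that a prediction at level $i$ is only emitted when a cell of width $2^{-i}$ is queried. By the scheduling in \textsc{SPR-Calibration} (the same counting that, in the proof of Lemma~\ref{lemma:SPR1}, shows that the reduced transcript of $G_{i,j,\ell}$ makes at most $2^{\tau-j+1}$ calls), the total number of rounds allotted to level $i$ is at most $O(2^{\tau-h-i})$ over the whole horizon. Each round emits one value, so the number of distinct values from level $i$ is also at most $O(2^{\tau-h-i})$. Combining the two counts gives a per-level bound of $O(\min\{2^i,2^{\tau-h-i}\})$. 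Taking a union over levels $1\le i\le\tau$ gives the first inequality; levels $i>\tau-h$ contribute $O(1)$ each, or nothing, under the minimum.

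The final equality is a computation. The terms $2^i$ increase and the terms $2^{\tau-h-i}$ decrease in $i$, and they cross at $i^\ast=(\tau-h)/2$. Therefore
\[
\sum_{i\le i^\ast}2^i+\sum_{i>i^\ast}2^{\tau-h-i}=O\!\left(2^{i^\ast}\right)+O\!\left(2^{\tau-h-i^\ast}\right)=O\!\left(2^{(\tau-h)/2}\right).
\]
Both series are geometric, so each is dominated by its largest term.

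We expect the main obstacle to be the second count, namely pinning down exactly where in the construction the number of rounds (equivalently, fresh prediction values) at level $i$ is capped by $2^{\tau-h-i}$. We would first try to read this off the epoch and phase structure of Algorithm~\ref{alg:dagan-A}, where level $i$ is activated only on a sub-horizon of length proportional to $2^{\tau-h-i}$. If that fails, we would instead invoke the corresponding sparsity statement in \cite{dagan2024breaking} (the same source as the $\min\{2^i,2^{\tau-h-i}\}$ term in Lemma~\ref{lemma:SPR1}) and check that it is unaffected by the half-open convention.
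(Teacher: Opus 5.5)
\textbf{The gap is in your second count.} You claim that the total number of rounds handled at level $i$ is $O(2^{\tau-h-i})$. That premise is false. Summing it over $i$ would mean all levels together handle only $O(2^{\tau-h})$ rounds, yet every one of the $T$ rounds returns a prediction from some level. Concretely, with a constant input $z\in[1/4,3/4]$, level $i$ absorbs $\Theta(2^{i+h})$ rounds before its cells saturate, so the finest levels see $\Theta(T)$ rounds.

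The call count from Lemma~\ref{lemma:SPR1} does not rescue this. The quantity $2^{\tau-j+1}$ counts \texttt{simulateGame} calls in $G_{i,j,\ell}$, and since $j\le i+h$ it is at least $2^{\tau-h-i+1}$, which is larger than what you need. The epoch and phase structure of the $A/B$ labeler is also the wrong place to look. Which levels get used is decided by the saturation thresholds $2^{j-i}$ in the placement phase of Algorithm~\ref{alg:dagan-spr-calibration}.

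\textbf{What you should count is distinct active cells, not rounds.}
\begin{enumerate}
\item The placement phase scans $(i,j)$ lexicographically and returns at the first cell with $|\operatorname{bias}_G(c)|<2^{j-i}$. So a level-$i$ cell, for $2\le i\le \tau-h$, can receive a placement only when the level-$(i-1)$ cell at time scale $j=i-1+h$ containing $z$ satisfies $|\operatorname{bias}|\ge 2^{h}$.
\item A placement at level $i-1$ changes that bias by $|z-p|\le 2^{1-i}$. A removal step never increases $|\operatorname{bias}|$: it moves a bias of magnitude $>1$ toward $0$ by at most $1$. Hence each such parent needs at least $2^{h+i-1}$ placement rounds of its own to saturate.
\item Therefore at most $T/2^{h+i-1}=O(2^{\tau-h-i})$ parents are ever saturated, and each contains two level-$i$ intervals.
\item Removal steps only touch cells that already carry bias, so they activate no new cells.
\item Each active level-$i$ cell yields at most the two values $\operatorname{prob}(c,\pm,G)$, which do not depend on $j$.
\end{enumerate}
This gives the $O(2^{\tau-h-i})$ term. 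It is the active-cell counting of Lemma~A.4 in \cite{dagan2024breaking}, which the paper simply cites; that matches your fallback.

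Your $O(2^i)$ count and the final geometric sum are fine. Two small corrections that leave the count unchanged: the two parity boards together cover $2^{i+1}$ intervals, not $2^i$, and predictions are the grid points $k/2^{i+1}$ just outside a cell's interval, not cell centers.
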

\begin{proof}
This is the active-cell counting bound of Lemma~A.4 of
\cite{dagan2024breaking}.  Its dyadic counting argument is unaffected by the endpoint tie-breaking convention fixed above.
\end{proof}

\begin{lemma}
\label{lemma:SPR3}
There exist constants \(C_{\mathrm{AB}}\ge 1\) and
\(\alpha,\beta>0\) satisfying $
    q:=\alpha+\beta<1$ such that, for every \(n,s\ge 1\),
$$
\operatorname{Val}^{\mathrm{red}}_{\mathrm{AB}}(n,s)
    \le
    \min\left\{
        n,\,
        s,\,
        C_{\mathrm{AB}}n^\alpha s^\beta
    \right\}.$$
Consequently, defining $
    \varepsilon_{\mathrm{AB}}:=1-q>0$
and $    \gamma
    :=
    \frac{q}{1+q}
    =
    \frac{1-\varepsilon_{\mathrm{AB}}}
         {2-\varepsilon_{\mathrm{AB}}}$,
we have the uniform bound $$
\operatorname{Val}^{\mathrm{red}}_{\mathrm{AB}}(n,s)
    \le
    C_{\mathrm{AB}}^{\frac{1}{(1+q)}}
    (ns)^\gamma .$$
In particular, for every \(\rho\ge 0\),
$$
    \operatorname{Val}^{\mathrm{red}}_{\mathrm{AB}}
    \bigl(n,\ n^\rho\bigr)
    =
    O\!\left(
        n^{\gamma(1+\rho)}
    \right).$$
\end{lemma}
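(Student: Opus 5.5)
The plan has two parts: a citation argument for the first bound and elementary optimization for the rest.

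\emph{First bound.} The trivial bounds come from counting. \(\operatorname{Surv}(H)\) counts occupied cells, so it is at most \(n\). For the bound by \(s\), each occupied cell on the actual board at the end must correspond to an entry of the reduced transcript \(\operatorname{Red}(H)\). An entry removed during the reduction was killed by a later query, so its sign was erased on the real board. The only caveat is that a kill in the full transcript may also erase non-adjacent signs, but those are erased too, so this only helps. Hence \(\operatorname{Surv}(H)\le\operatorname{rlen}(H)\le s\), uniformly in \(m\).

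The nontrivial bound \(C_{\mathrm{AB}}n^\alpha s^\beta\) is the content of the improved \textsc{SPR} strategy analysis in \citet{dagan2024breaking}, namely their recursive \(A/B\) labeling bound (Lemma~A.2 and its corollaries, which feed Theorem~1.3). I would invoke it directly. The one point to check is that their argument already works on the adjacent-deletion reduced sequence, and this is exactly how Definition~\ref{def:reduced-transcript} is built. So their bound, stated in terms of the number of reduced steps, transfers to \(\operatorname{Val}^{\mathrm{red}}_{\mathrm{AB}}(n;m,s)\) for each \(m\), and therefore to the supremum over \(m\). This identification of their game value with our reduced-execution value is the step I expect to need the most care. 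Verifying that the rollback-based state restoration matches their deletion argument is where I would spend effort. The constants \(\alpha,\beta\) with \(\alpha+\beta<1\) are the ones defining \(\varepsilon_{\mathrm{AB}}\).

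\emph{Uniform bound.} Let \(V:=\operatorname{Val}^{\mathrm{red}}_{\mathrm{AB}}(n,s)\). I would interpolate the three bounds by taking a geometric mean with exponents \(\lambda,\mu,\nu\ge0\) summing to \(1\):
\[
V\le n^{\lambda}s^{\mu}\bigl(C_{\mathrm{AB}}n^\alpha s^\beta\bigr)^{\nu}.
\]
Choose \(\nu=1/(1+q)\), \(\lambda=\beta\nu\) and \(\mu=\alpha\nu\). Then \(\lambda+\mu+\nu=(\alpha+\beta+1)/(1+q)=1\). The exponent of \(n\) is \(\beta\nu+\alpha\nu=q/(1+q)=\gamma\), and likewise the exponent of \(s\) is \(\gamma\). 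This gives
\[
V\le C_{\mathrm{AB}}^{1/(1+q)}(ns)^\gamma.
\]
Substituting \(q=1-\varepsilon_{\mathrm{AB}}\) gives the stated form of \(\gamma\).

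\emph{Special case.} Setting \(s=n^\rho\) gives \((ns)^\gamma=n^{\gamma(1+\rho)}\). The constant \(C_{\mathrm{AB}}^{1/(1+q)}\) is absorbed into the \(O(\cdot)\).
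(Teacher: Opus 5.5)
Your proposal follows the same route as the paper: the bounds by $n$ and $s$ come from the game rules, the $C_{\mathrm{AB}}n^\alpha s^\beta$ bound is imported from \citet{dagan2024breaking}, and the rest is interpolation plus the substitution $s=n^\rho$. On the import, the paper cites their Lemma~5.1 rather than Lemma~A.2, applies it to the bias-zero root $A$-instance to bound each sign type by $C_0 n^\alpha t^\beta$, and sums the two types to get $C_{\mathrm{AB}}=2C_0$. Your geometric-mean interpolation with weights $\beta/(1+q)$, $\alpha/(1+q)$, $1/(1+q)$ correctly supplies the step the paper states without derivation, and your injection of surviving signs into $\operatorname{Red}(H)$ justifies $\operatorname{Surv}(H)\le s$ more carefully than the paper's one-line appeal to the rules.
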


\begin{proof}
Fix \(n,s\ge 1\), and consider an arbitrary legal play of length
\(t\le s\).  Let \(R_\sigma\) denote the number of surviving signs of
type \(\sigma\in\{-1,+1\}\).
The root \(A\)-instance of the \(A/B\) strategy has bias parameter
zero.  Therefore, Lemma~5.1 of \cite{dagan2024breaking} gives
constants \(C_0,\alpha,\beta>0\), with \(\alpha+\beta<1\), such that $
    R_\sigma
    \le
    C_0 n^\alpha t^\beta$
for each \(\sigma\in\{-1,+1\}\).  Summing over the two sign types and
using \(t\le s\), we obtain
\[
    R_{-1}+R_{+1}
    \le
    2C_0 n^\alpha s^\beta.
\]
Thus, after setting \(C_{\mathrm{AB}}:=2C_0\),
\[
\operatorname{Val}^{\mathrm{red}}_{\mathrm{AB}}(n,s)
    \le
    C_{\mathrm{AB}}n^\alpha s^\beta.
\]

The bound $\operatorname{Val}^{\mathrm{red}}_{\mathrm{AB}}(n,s)\le \min\{n,s\}$ follows directly from the rules of the
\textsc{SPR} game. 
This proves
\[
\operatorname{Val}^{\mathrm{red}}_{\mathrm{AB}}(n,s)
    \le
    \min\left\{
        n,\,
        s,\,
        C_{\mathrm{AB}}n^\alpha s^\beta
    \right\}.
\]

Therefore, we have that
\[
\operatorname{Val}^{\mathrm{red}}_{\mathrm{AB}}(n,s)
    \le
    C_{\mathrm{AB}}^{\frac{1}{1+q}}(ns)^{\frac{q}{1+q}} =  C_{\mathrm{AB}}^{\frac{1}{1+q}}(ns)^{\gamma}.
\]
By choosing $s = n^{\rho}$, we obtain
\begin{align*}
    \operatorname{Val}_{\mathrm{AB}}^{\mathrm{red}}
    \bigl(n,n^{\rho}\bigr)
    \le
    C_{\mathrm{AB}}^{1/(1+q)}
    \left(n\cdot n^{\rho}\right)^\gamma=
    O\!\left(n^{\gamma(1+\rho)}\right).
\end{align*}

\end{proof}

By Lemma~\ref{lemma:SPR1} and Lemma~\ref{lemma:SPR3}, we have that 
\begin{lemma}\label{lemma:SPR4}
 For any input sequence \(z_1,\ldots,z_T\in[0,1]\), when
\textsc{SPR-Calibration} is executed with input \((z_t)_{t=1}^T\), the resulting bias term
satisfies
\begin{align}
 &  \sum_{p\in\mathcal P_T}
\left|
\sum_{t\in [T]: p_t = p}(z_t-p_t)
\right| \le  O\left(\sum_{i=1}^{\tau-h}\sum_{j=i+1}^{i+h} 2^{j-i + \gamma(i+\tau -j)} + \log_2(T)\cdot  \sum_{i=1}^{\tau-h} \min\{2^i, 2^{\tau-h-i}\}  \right) .\nonumber
\end{align}
\end{lemma}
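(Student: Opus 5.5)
Lemma~\ref{lemma:SPR4} follows by substituting the uniform bound of Lemma~\ref{lemma:SPR3} into each term of Lemma~\ref{lemma:SPR1}, so the plan is a term-by-term comparison.

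For the first sum, fix \(1\le i\le\tau-h\) and \(i+1\le j\le i+h\). Apply Lemma~\ref{lemma:SPR3} with \(n=2^i\) and \(s=2^{\tau-j+1}\):
\[
\operatorname{Val}^{\mathrm{red}}_{\mathrm{AB}}\bigl(2^i,2^{\tau-j+1}\bigr)
\le C_{\mathrm{AB}}^{1/(1+q)}\bigl(2^{i}\cdot 2^{\tau-j+1}\bigr)^{\gamma}
= C_{\mathrm{AB}}^{1/(1+q)}\,2^{\gamma}\,2^{\gamma(i+\tau-j)} .
\]
Here \(C_{\mathrm{AB}}^{1/(1+q)}2^{\gamma}\) is an absolute constant, since \(\gamma<1/2\). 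Multiplying by \(2^{j-i}\) gives \(O\bigl(2^{j-i+\gamma(i+\tau-j)}\bigr)\) for each summand. The implied constant is uniform in \(i,j\), so summing over the index range leaves this part inside the \(O(\cdot)\).

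For the second sum, the summand \(\min\{2^i,2^{\tau-h-i}\}\) does not depend on \(j\). The inner sum over \(j\) therefore just multiplies it by \(h\), giving
\[
h\sum_{i=1}^{\tau-h}\min\{2^i,2^{\tau-h-i}\}.
\]
It remains to bound \(h\) by \(O(\log_2 T)\). From the parameter choices in the construction of \textsc{SPR-Calibration} (Algorithms~\ref{alg:dagan-A}, \ref{alg:dagan-B}), \(2^\tau\) is of order a polynomial in \(T\), and \(h\le\tau\). Hence \(h\le\tau=O(\log_2 T)\), and the second sum is \(O\bigl(\log_2(T)\sum_{i}\min\{2^i,2^{\tau-h-i}\}\bigr)\). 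Adding the two pieces gives the claimed bound.

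The only step that needs care is this last one: checking from the algorithm's parameter setup that \(h\le\tau=O(\log T)\). I would read it directly off the definitions of \(\tau\) and \(h\). If \(h\) happened to be a constant, the \(\log_2 T\) factor would simply be loose, and the inequality would still hold.
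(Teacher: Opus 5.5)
Your proposal is correct and matches the paper's argument. The paper likewise substitutes the uniform bound $\operatorname{Val}^{\mathrm{red}}_{\mathrm{AB}}(n,s)\le C_{\mathrm{AB}}^{1/(1+q)}(ns)^{\gamma}$ of Lemma~\ref{lemma:SPR3}, with $n=2^i$ and $s=2^{\tau-j+1}$, into Lemma~\ref{lemma:SPR1}, and absorbs the $h$ identical copies of the second summand using $h<\tau=\lceil\log_2 T\rceil$ (that bound comes from the ``Scales and \textsc{SPR} instances'' paragraph rather than from Algorithms~\ref{alg:dagan-A}--\ref{alg:dagan-B}, which does not affect your argument).
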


%% file: SPR_alg.tex
\subsection{Deterministic Implementation of \textsc{SPR-Calibration}}
\label{sec:spralg}

This subsection specifies the version of \textsc{SPR-Calibration} used by our
outer algorithm. For completeness, we present the main algorithm of
\cite{dagan2024breaking} in Algorithm~\ref{alg:dagan-spr-calibration}, together
with its key subroutines: \texttt{simulateGame} in
Algorithm~\ref{alg:dagan-simulate}, the recursive \(A\)-procedure in
Algorithm~\ref{alg:dagan-A}, and the recursive \(B\)-procedure in
Algorithm~\ref{alg:dagan-B}.
 


The feature needed by Algorithm~\ref{alg:main} is a deterministic one-step
transition.  Given the pre-round state $S$ (see Definition~\ref{def:SPRSTATE}), a round index $t$, and an input
$z\in[0,1]$, define
\[
    \operatorname{Step}_{\mathrm{SPR}}(S,t,z)
    = (p,S^+),
\]
where $p$ is the forecast returned by one round of
\textsc{SPR-Calibration} and $S^+$ is the resulting state.  We fix all loop
orders, tie-breaking rules, and boundary conventions below, so this transition
is well-defined.  We then set
\[
\begin{split}
    \operatorname{Predict}_{\mathrm{SPR}}(S,t,z)
        &:= p,\\
    \operatorname{Update}_{\mathrm{SPR}}(S,t,z)
        &:= S^+.
\end{split}
\]


\paragraph{Scales and \textsc{SPR} instances.}
Let  $
    \tau:=\left\lceil\log_2 T\right\rceil$, 
    and let
    $\overline T:=2^\tau$.
Run the first $T$ rounds of a subroutine initialized for the padded horizon
$\overline T$.  Since $T\le \overline T<2T$, this padding changes all
asymptotic bounds by at most a constant factor.  We assume $\tau\ge2$; the
finitely many smaller horizons can be handled by any fixed forecasting rule.
Fix an integer $1\leq h <\tau$. Define $
    \mathcal I_h:=\{1,\ldots,\tau-h\}$, $
    \mathcal J_i:=\{i+1,\ldots,i+h\}$ and $
    C_i:=\{1,\ldots,2^i\}.$ 
For every $i\in\mathcal I_h$, $j\in\mathcal J_i$, and
$\ell\in\{0,1\}$, the subroutine maintains one \textsc{SPR} instance
$G_{i,j,\ell}$ with cell set $C_i$.  The index $i$ specifies the spatial
resolution, $j$ specifies the time scale, and $\ell$ separates the even and
odd dyadic intervals at resolution $2^{-(i+1)}$. 

\paragraph{Cells, intervals, and forecast values.} For $z\in[0,1]$, let $
    m_i(z)
    :=
    \min\left\{
        \left\lfloor 2^{i+1}z\right\rfloor,
        2^{i+1}-1
    \right\}$, $
    \ell_i(z)
    :=m_i(z)\bmod 2$
 and     $c_i(z)
    :=\frac{m_i(z)-\ell_i(z)}{2}+1.$ 
  We define $$
    \operatorname{cell}(i,j,z)
    :=
    \bigl(c_i(z),G_{i,j,\ell_i(z)}\bigr).$$
Thus, $j$ selects the \textsc{SPR} instance at the desired time scale, while $i$ and $z$
determine the parity and cell.

For $c\in C_i$ and $\ell\in\{0,1\}$, write $m=2(c-1)+\ell$ and define
\[
    \operatorname{interval}(c,G_{i,j,\ell})
    :=
    \begin{cases}
    \left[\dfrac{m}{2^{i+1}},\dfrac{m+1}{2^{i+1}}\right),
        &m<2^{i+1}-1,\\[2ex]
    \left[\dfrac{2^{i+1}-1}{2^{i+1}},1\right],
        &m=2^{i+1}-1.
    \end{cases}
\]
In this way, every \(z\in[0,1]\)
is assigned to a unique cell\footnote{We use this half-open interval convention to make the interval assignment explicit.}. 
For a sign $\sigma\in\{+,-\}$, define
\[
    \operatorname{prob}(c,\sigma,G_{i,j,\ell})
    :=
    \begin{cases}
    \dfrac{\max\{0,2(c-1)+\ell-1\}}{2^{i+1}},
        &\sigma=+,\\[2ex]
    \dfrac{\min\{2(c-1)+\ell+2,2^{i+1}\}}{2^{i+1}},
        &\sigma=-.
    \end{cases}
\]

\paragraph{The \textsc{SPR} state.} To formalize the subroutines that we borrow from the
\textsc{SPR-Calibration} algorithm of~\cite{dagan2024breaking}, we first give a
precise definition of the state of an \textsc{SPR-Calibration} instance.
\begin{definition}[\textsc{SPR} state]\label{def:SPRSTATE}
For each instance $G=G_{i,j,\ell}$, the subroutine stores
\[
\begin{array}{ll}
    \operatorname{bias}_G:C_i\to\mathbb R,
        &\text{the accumulated surrogate bias in each cell},\\
    \sigma_G:C_i\to\{+,-,\varnothing\},
        &\text{the current \textsc{SPR} sign configuration},\\
    \mathsf L_G,
        &\text{the mutable state of the explicit $A/B$ labeler}.
\end{array}
\]
Initially, we set $
    \operatorname{bias}_G(c)=0$ and  $   \sigma_G(c)=\varnothing$
for every instance $G$ and cell $c$.  The labeler state $\mathsf L_G$ is
initialized by $
    A_G^{\mathrm{root}}
    :=A.\operatorname{initialize}(1,2^i,0).$
    
 Accordingly, a query to calibration cell $c$ is
passed directly to the labeler as query $c$.
The complete state is
\begin{align}
    S^{\mathrm{SPR}}_{\mathrm{local}}
    =
    \left\{
        \operatorname{bias}_{G},
        \sigma_{G},
        \mathsf L_{G}
    \right\}_{
        i\in\mathcal I_h,
        j\in\mathcal J_i,
        \ell\in\{0,1\}
    }.\label{def:currentspr}
\end{align}
Moreover, to deal with the reduced transcript (see Definition~\ref{def:reduced-transcript}), we need to keep a record of all historical states as follows:
\begin{align}
 S^{\mathrm{SPR}}_{t} = \{  S^{\mathrm{SPR}}_{\mathsf{local},s}\}_{s\leq t-1} , \label{def:StSPR}
\end{align}
where $S^{\mathrm{SPR}}_{\mathsf{local},s}$ is the local state defined as \eqref{def:currentspr} at round $s$.
\end{definition}

During the learning process, the state $S_t^{\mathrm{SPR}}$ is updated to
according to the changes in $\mathrm{bias}_G$, $\sigma_G$, and $\mathsf{L}_G$ for
each instance $G = G_{i,j,\ell}$.

\paragraph{Deterministic conventions.}
All loops are executed in increasing lexicographic order of their indices.  If
more than one cell is admissible in the bias-removal phase, the smallest
admissible cell is selected.  Finally, define $
    \operatorname{sgn}(b) = +$ for $b\geq 0 $ and $ \operatorname{sgn}(b) = -$ for $b<0$. 
These conventions make the algorithm deterministic.

\paragraph{Placement of bias.} Given an input $z$, the algorithm first attempts to
reduce a previously accumulated bias of sufficiently large magnitude. If no
such bias-reduction update is available, the algorithm selects an \textsc{SPR} instance on which
the cell containing $z$ has sufficiently small current bias and adds the new
bias contribution to that cell. In particular, if the selected cell does not carry an \textsc{SPR} sign, the algorithm first invokes
\texttt{simulateGame} to place one.

\begin{algorithm}[!t]
\small
\caption{\textsc{SPR-Calibration} (Algorithm 1 in \cite{dagan2024breaking})}
\label{alg:dagan-spr-calibration}
\begin{algorithmic}[1]
\REQUIRE Pre-round state $S$, round index $t$, and input $z\in[0,1]$.

\STATE \textbf{Bias-removal phase.}
\FOR{$i\in\mathcal I_h$ in increasing order}
    \FOR{$j\in\mathcal J_i$ in increasing order}
        \STATE $(c,G)\gets\operatorname{cell}(i,j,z)$. \COMMENT{\small Locate the \textsc{SPR} instance.}
    \STATE  $\mathcal B\gets
    \{c'<c:\operatorname{bias}_G(c')<-1\}  \cup
    \{c'>c:\operatorname{bias}_G(c')>1\}$. \COMMENT{\small{Find cells with removable bias.}}
        \IF{$\mathcal B\neq\varnothing$}
            \STATE $\bar c\gets\min\mathcal B$. \COMMENT{\small Deterministic rule to rank the cells.}
\STATE
\(\left.
\begin{aligned}
&\sigma \gets \operatorname{sgn}(\operatorname{bias}_G(\bar c)),\\
&p \gets \operatorname{prob}(\bar c,\sigma,G),\\
&\operatorname{bias}_G(\bar c)
\gets \operatorname{bias}_G(\bar c)+(z-p).
\end{aligned}
\right\}\)
\COMMENT{\small Update the selected \textsc{SPR} instance \(G\).}
\STATE Update \(S\) according to~\eqref{def:StSPR}.
\COMMENT{\small Update the state $S$ after updating the \textsc{SPR} instance \(G\).}
            \STATE \textbf{return} $(p,S)$.\label{line:return}
        \ENDIF
    \ENDFOR
\ENDFOR

\STATE \textbf{Bias-placement phase.}
\FOR{$i\in\mathcal I_h$ in increasing order}
    \FOR{$j\in\mathcal J_i$ in increasing order}
        \STATE $(c,G)\gets\operatorname{cell}(i,j,z)$. \COMMENT{\small Locate the \textsc{SPR} instance.}
        \IF{$|\operatorname{bias}_G(c)|<2^{j-i}$}
            \IF{$\sigma_G(c)=\varnothing$}
\STATE \texttt{simulateGame}\((c,G)\). \COMMENT{\small{Call the \textsc{SPR} instance; the  labeling procedure \(\mathsf L_G\) is updated.}}
            \ENDIF
            \STATE \(\left.
\begin{aligned}
&\sigma\gets\sigma_G(c) ,
\\ & p\gets\operatorname{prob}(c,\sigma,G),
\\ & \operatorname{bias}_G(c)
            \gets\operatorname{bias}_G(c)+(z-p).
\end{aligned}
\right\}\)\COMMENT{\small Update the selected \textsc{SPR} instance \(G\).}
           \STATE Update \(S\) according to~\eqref{def:StSPR}.
\COMMENT{\small Update the state $S$ after updating the \textsc{SPR} instance \(G\).}
            \STATE \textbf{return} $(p,S)$.\label{line:return1}
        \ENDIF
    \ENDFOR
\ENDFOR
\end{algorithmic}
\end{algorithm}

\paragraph{The \texttt{simulateGame} subroutine.}
A call to \texttt{simulateGame} performs one legal move of the \textsc{SPR} game on an \textsc{SPR} instance whose queried cell is empty.  It removes every minus sign strictly to the
left of the queried cell and every plus sign strictly to its right, asks the
explicit $A/B$ strategy for the new sign, and records that sign in the queried
cell.

\begin{algorithm}[!t]
\small
\caption{\texttt{simulateGame}$(c,G)$}
\label{alg:dagan-simulate}
\begin{algorithmic}[1]
\REQUIRE An empty cell \(c\in C_i=[2^i]\) of an \textsc{SPR} instance
\(G=G_{i,j,\ell}\).
\STATE \COMMENT{\small Remove the $-$ signs left to $c$.}
\FOR{each \(c'\in C_i\) with \(c'<c\)} 
    \IF{\(\sigma_G(c')=-\)}
        \STATE \(\sigma_G(c')\gets\varnothing\).
    \ENDIF
\ENDFOR
\STATE\COMMENT{\small Remove the $+$ signs right to $c$.}
\FOR{each \(c'\in C_i\) with \(c'>c\)}  
    \IF{\(\sigma_G(c')=+\)}
        \STATE \(\sigma_G(c')\gets\varnothing\).
    \ENDIF
\ENDFOR
\STATE \(s\gets\mathsf L_G.\operatorname{label}(c)\). \COMMENT{\small Run the \textsc{SPR} procedure to decide the sign of $c$.}
\STATE \(\sigma_G(c)\gets s\).
\end{algorithmic}
\end{algorithm}

\paragraph{The explicit $A/B$ Player--L strategy.}
Each \textsc{SPR} instance uses the deterministic Player--L strategy of
\cite{dagan2024breaking}.  It is defined by two mutually recursive procedures, $A$ and $B$.  An $A$-object controls an interval $[l,r]$ and an integer
bias parameter $b$.  At a leaf it returns $\operatorname{sgn}(b)$.  At an
internal node it delegates to its currently active $B$-object and restarts that
object whenever the latter returns the restart symbol $\perp$.

\begin{algorithm}[!t]
\small
\caption{Player--L procedure $A$ (Algorithm 3 in \cite{dagan2024breaking})}
\label{alg:dagan-A}
\begin{algorithmic}[1]
\REQUIRE Integers $1\le l\le r$ and bias parameter $b\in\mathbb Z$.

\STATE \textbf{Procedure} $A.\operatorname{initialize}(l,r,b)$:
\IF{$l<r$}
    \STATE $\operatorname{recentB}
    \gets B.\operatorname{initialize}(l,r,b,1)$.
\ELSE
    \STATE $\operatorname{recentB}\gets\varnothing$.
\ENDIF
\STATE $\operatorname{count}\gets0$.

\STATE \textbf{Procedure} $A.\operatorname{label}(s)$, where $s\in[l,r]$:
\IF{$l=r$}
    \STATE \textbf{return} $\operatorname{sgn}(b)$.
\ELSE
    \STATE $\operatorname{count}\gets\operatorname{count}+1$.
    \STATE $\sigma\gets\operatorname{recentB}.\operatorname{label}(s)$.
    \IF{$\sigma=\perp$}
        \STATE $\operatorname{recentB}
        \gets B.\operatorname{initialize}(l,r,b,\operatorname{count})$.
        \STATE $\operatorname{count}\gets1$.
        \STATE \textbf{return}
        $\operatorname{recentB}.\operatorname{label}(s)$.
    \ELSE
        \STATE \textbf{return} $\sigma$.
    \ENDIF
\ENDIF
\end{algorithmic}
\end{algorithm}

A $B$-object splits its interval into two halves, stores one child $A$-object
for each half, and uses four phases to compare how often the two halves have
been queried.  The parameter $M$ is a running guess for the relevant execution
length and implements a doubling mechanism. 
All counters, phases, child
objects, and pointers are part of the labeler state $\mathsf L_G$.  Thus, after
$\mathsf L_G$ is included in $S_t^{\mathrm{SPR}}$, a call to
\texttt{simulateGame} is a deterministic function of the copied state and the
queried cell.

\begin{algorithm}[!t]
\small
\caption{Player--L procedure $B$ (Algorithm 4 in \cite{dagan2024breaking})}
\label{alg:dagan-B}
\begin{algorithmic}[1]
\REQUIRE Integers $1\le l<r$, bias parameter $b\in\mathbb Z$, and guess
parameter $M\in\mathbb N$.

\STATE \textbf{Procedure} $B.\operatorname{initialize}(l,r,b,M)$:
\STATE $m\gets\lfloor(l+r)/2\rfloor$.
\STATE $A[0]\gets A.\operatorname{initialize}(l,m,b)$.
\STATE $A[1]\gets A.\operatorname{initialize}(m+1,r,b)$.
\STATE $\operatorname{prevHalf}\gets-1$.
\STATE $\operatorname{countHalf}[0]\gets0$,
$\operatorname{countHalf}[1]\gets0$.
\STATE $\operatorname{phase}\gets1$.
\STATE \textbf{Procedure} $B.\operatorname{label}(s)$, where $s\in[l,r]$:
\STATE $\operatorname{half}\gets0$ if $s\le m$, and
$\operatorname{half}\gets1$ otherwise.
\STATE $\operatorname{countHalf}[\operatorname{half}]
\gets\operatorname{countHalf}[\operatorname{half}]+1$.

\IF{$\operatorname{phase}=1$}
    \IF{$\operatorname{countHalf}[\operatorname{half}]=M$
    and
    $M\le\operatorname{countHalf}[1-\operatorname{half}]\le2M$}
        \STATE $\operatorname{phase}\gets2$.
    \ELSIF{$\operatorname{countHalf}[\operatorname{half}]=M$
    and
    $\operatorname{countHalf}[1-\operatorname{half}]>2M$}
        \STATE $\operatorname{phase}\gets3$.
    \ENDIF

\ELSIF{$\operatorname{phase}=2$}
    \IF{$\operatorname{half}\neq\operatorname{prevHalf}$}
        \STATE \textbf{return} $\perp$.
    \ENDIF
    \IF{$\operatorname{countHalf}[\operatorname{half}]
    =2\operatorname{countHalf}[1-\operatorname{half}]+1$}
        \STATE $\operatorname{phase}\gets3$.
    \ENDIF

\ELSIF{$\operatorname{phase}=3$}
    \IF{$\operatorname{countHalf}[\operatorname{half}]
    =\left\lfloor
    \operatorname{countHalf}[1-\operatorname{half}]/2
    \right\rfloor+1$}
        \STATE $\operatorname{phase}\gets4$.
        \IF{$\operatorname{half}=0$}
            \STATE $A[0]\gets A.\operatorname{initialize}(l,m,b+1)$.
        \ELSE
            \STATE $A[1]\gets A.\operatorname{initialize}(m+1,r,b-1)$.
        \ENDIF
    \ENDIF

\ELSIF{$\operatorname{phase}=4$}
    \IF{$\operatorname{countHalf}[\operatorname{half}]
    >\operatorname{countHalf}[1-\operatorname{half}]$}
        \STATE \textbf{return} $\perp$.
    \ENDIF
\ENDIF

\STATE $\sigma\gets A[\operatorname{half}].\operatorname{label}(s)$.
\STATE $\operatorname{prevHalf}\gets\operatorname{half}$.
\STATE \textbf{return} $\sigma$.
\end{algorithmic}
\end{algorithm}

%% file: algorithm.tex
\section{Algorithm}
\label{sec:alg}

We present our main algorithm in Algorithm~\ref{alg:main}. The algorithm
maintains an instance of \textsc{SPR-Calibration}. The main difficulty in
applying \textsc{SPR-Calibration} is that the conditional means
\(\{e_t\}_{t=1}^T\) are not available to the forecaster. To address this, we
use a Blackwell-style correction layer to construct, at each round
\(t\in[T]\), a surrogate estimate \(\widetilde e_t\) of \(e_t\). This surrogate
is then passed as the input to the \textsc{SPR-Calibration} instance. 

 Let $\mathcal X  =\{0, \eta, 2\eta,\ldots, N\eta \}$ be a grid with threshold $\eta = T^{-1/2}$ and  $N =\left\lfloor 1/\eta\right\rfloor$.

At each round $t$, let \(S_t^{\mathrm{SPR}}\) denote the internal state of the
\textsc{SPR-Calibration} instance used by Algorithm~\ref{alg:main} 
before round \(t\) and $\mathcal{P}_t = \{ p_s: s\leq t\}$.  For each $x\in \mathcal{X}$ and $p\in \mathcal{P}_{t-1}$, we define

\begin{align}
 D_t(x)
     & :=  \operatorname{Predict}_{\mathrm{SPR}}( 
    S^{\mathrm{SPR}}_t,t ,x)\label{eq:defdt}
    \\ R_{p, t-1} & : = \sum_{s<t: p_s =p} (y_s - \tilde{e}_s)\label{eq:defR}
    \end{align}

Moreover, for each \(x\in\mathcal X\), define $
\widehat R_{x,t-1}:=R_{D_t(x),t-1}.$ 
In particular, if \(D_t(x)\notin\mathcal P_{t-1}\),  we use the default initialization $
\widehat R_{x,t-1}
=
R_{D_t(x),t-1}
=
0.$

Given the quantities \(\widehat R_{x,t-1}\) defined above, we solve the following
Blackwell-style minimax program to obtain an optimal distribution
\(\mu_t\in\Delta(\mathcal X)\), where \(\Delta(\mathcal X)\) denotes the set of
probability distributions over \(\mathcal X\).
\begin{align}
\beta_t
=
\min_{\mu\in\Delta(\mathcal X)}
\max_{y\in\{0,1\}}
\sum_{x\in\mathcal X}
\mu(x)\widehat{R}_{x, t-1}(y-x).\label{eq:opt}
\end{align}

Then the algorithm samples the surrogate mean \(\widetilde e_t\sim \mu_t\), and
commits the deterministic \textsc{SPR-Calibration} steps:
\begin{align}
p_t
 = D_t(\widetilde e_t) & =
\operatorname{Predict}_{\mathrm{SPR}}
(S_t^{\mathrm{SPR}},t,\widetilde e_t), \nonumber
\\S_{t+1}^{\mathrm{SPR}}
 & =
\operatorname{Update}_{\mathrm{SPR}}
(S_t^{\mathrm{SPR}},t,\widetilde e_t).\nonumber
\end{align}

After outputting the forecast \(p_t\) and observing the true outcome \(y_t\), we update $\mathcal{P}_t$ and
the residuals \(R_{p,t}\) for all \(p\in\mathcal P_t\).

\begin{algorithm}[!t]
\small
\caption{Blackwell-Wrapped \textsc{SPR-Calibration}}
\label{alg:main}
\begin{algorithmic}[1]
\REQUIRE Horizon $T$, grid $\mathcal X$.
\STATE Initialize the internal state $S^{\mathrm{SPR}}_1$ of \textsc{SPR-Calibration}
\STATE Initialize the set of actually predicted values $\mathcal P_0\gets\emptyset$

\FOR{$t=1,2,\ldots,T$}

    \STATE Define $D_t(x)$ following \eqref{eq:defdt} for all $x\in \mathcal{X}$;
    \STATE For all $x\in \mathcal{X}$, set $\widehat{R}_{x,t-1} = R_{D_t(x),t-1}$ .
    \STATE Choose $\mu_t\in\Delta(\mathcal X)$ by solving the finite minimax optimization
    $$\min_{\mu\in\Delta(\mathcal X)}
\max_{y\in\{0,1\}}
\sum_{x\in\mathcal X}
\mu(x)\widehat{R}_{x,t-1}(y-x)$$

    \STATE Sample an internal surrogate mean $
    \widetilde e_t\sim\mu_t.$

    \STATE Feed $\widetilde e_t$ to \textsc{SPR-Calibration} and update \textsc{SPR-Calibration}:
    \begin{align}
    (p_t, S^{\mathrm{SPR}}_{t+1})
    \gets
    \left(\operatorname{Predict}_{\mathrm{SPR}}(S^{\mathrm{SPR}}_t,t,\widetilde e_t) , \operatorname{Update}_{\mathrm{SPR}}(S^{\mathrm{SPR}}_t,t,\widetilde e_t)\right).\label{eq:update1}
    \end{align}
    \IF{$p_t\notin\mathcal P_{t-1}$}
        \STATE $\mathcal P_t\gets \mathcal P_{t-1}\cup\{p_t\}$
        \STATE   $R_{p_t, t-1}\gets 0$
    \ELSE
        \STATE $\mathcal P_t\gets\mathcal P_{t-1}$
    \ENDIF
    \STATE Observe the outcome $y_t\in\{0,1\}$.
    \STATE Update the external Blackwell residual: $
    R_{p_t,t}
    \gets
    R_{p_t,t-1}+y_t-\widetilde e_t$.

    \STATE For all $p\in\mathcal P_t\setminus\{p_t\}$, set $
    R_{p,t}\gets R_{p,t-1}$.

\ENDFOR
\end{algorithmic}
\end{algorithm}

%% file: analysis.tex
\section{Analysis}\label{sec:analysis}

In this section, we present the proof of Theorem~\ref{thm:main}. The proof consists of two parts: bounding the calibration error and the computational cost.

\subsection{Proof of Calibration Error}

Set the hyperparameters of \textsc{SPR-Calibration} as
\(\tau=\left\lceil \log_2 T\right\rceil\), and choose the scale parameter
\(h\leq \tau\) later.
Recall  $\mathcal{P}_t=\{p_s:s\le t\}$. For  $t\in [T]$ and  $ p\in \mathcal{P}_t$, recall the outer residual $
R_{p,t}
=
\sum_{s\le t:p_s=p}(y_s-\widetilde e_s)$ if $p\in \mathcal{P}_t$. Define $R_{p,t} = 0$ if $p\notin \mathcal{P}_t$.

For every $p\in\mathcal P_T$,
\[
\sum_{t\in [T]: p_t = p}(y_t-p)
=
\sum_{t\in [T]: p_t = p}(\widetilde e_t-p)
+
\sum_{t\in [T]: p_t = p}(y_t-\widetilde e_t).
\]
Therefore,
\begin{align}
\mathbb{E}[\operatorname{CalErr}_T]
\le
\underbrace{\mathbb{E}\left[
\sum_{p\in\mathcal P_T}
\left|
\sum_{t\in [T]: p_t = p}(\widetilde e_t-p)
\right|\right]
}_{\text{\textsc{SPR-Calibration} bias term}}
+
\underbrace{
\mathbb{E}\left[\sum_{p\in\mathcal P_T}|R_{p,T}|\right]
}_{\text{Blackwell residual term}}.
\end{align}

\paragraph{Bound of the \textsc{SPR-Calibration} bias term.} 
The bound of the \textsc{SPR-Calibration} bias  directly follows the result in \cite{dagan2024breaking}. 
By  Lemma~\ref{lemma:SPR4}, we have that 
\begin{align}
\mathbb{E}\left[
\sum_{p\in\mathcal P_T}
\left|
\sum_{t\in [T]: p_t = p}(\widetilde e_t-p)
\right|\right] \leq O\!\left(
\sum_{i=1}^{\tau-h}
\sum_{j=i+1}^{i+h}
2^{j-i}
2^{\gamma(i+\tau-j)} + \sum_{i\leq \tau - h} \sum_{j\leq i+h}\min\{ 2^i ,2^{\tau-h-i}\}
\right).\label{eq:boundB}
\end{align}

\paragraph{Bound of the Blackwell residual term.}

By the property of \textsc{SPR-Calibration} (see Lemma~\ref{lemma:SPR2}), we have that $
|\mathcal P_T|\le O\!\left(2^{(\tau-h)/2}\right).$
Then by Cauchy's inequality, we have that 
\begin{align}
\mathbb{E}\left[\sum_{p\in \mathcal{P}_T} \left| R_{p,T} \right|\right]\leq O\left( \sqrt{ 2^{(\tau-h)/2} \mathbb{E}\left[\sum_{p\in \mathcal{P}_T}R_{p,T}^2\right]} \right)\leq  O\left( \sqrt{  \mathbb{E}[\Phi_T]}\cdot 2^{(\tau-h)/4}\right),\label{eq:boundP1}
\end{align}
where we define $\Phi_t =\sum_{p\in \mathcal{P}_t}R_{p,t}^2 $ for $t\in [T]$ and $\Phi_0 = 0$. Then we have the following lemma to bound $\mathbb{E}[\Phi_T]$.

\begin{lemma}\label{lemma:bdpht} $\mathbb{E}[\Phi_T] = O\!\left((\sqrt T+\eta T)^2\right)$. 
If $\eta = O( T^{-1/2})$, then $\mathbb{E}[\Phi_T] = O(T)$.
\end{lemma}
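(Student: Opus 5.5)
The plan is a standard Blackwell-type quadratic-potential argument applied to $\Phi_t=\sum_{p\in\mathcal P_t}R_{p,t}^2$.

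\textbf{Step 1: one-step expansion.} In round $t$ only the coordinate $p_t$ changes: $R_{p_t,t}=R_{p_t,t-1}+(y_t-\widetilde e_t)$, where $R_{p_t,t-1}=0$ if $p_t$ is new. Hence
\[
\Phi_t-\Phi_{t-1}=2R_{p_t,t-1}(y_t-\widetilde e_t)+(y_t-\widetilde e_t)^2\le 2R_{p_t,t-1}(y_t-\widetilde e_t)+1 .
\]
The key observation is that $p_t=D_t(\widetilde e_t)$, so $R_{p_t,t-1}=\widehat R_{\widetilde e_t,t-1}$. This holds because $D_t$ is a deterministic function of the pre-round state $S_t^{\mathrm{SPR}}$, and the default convention $\widehat R=0$ for unseen predictions matches the reset in Algorithm~\ref{alg:main}.

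\textbf{Step 2: conditional expectation of the cross term.} Let $\mathcal F_{t-1}$ be the history before round $t$. The adversary chooses $y_t$ as a function of $\mathcal F_{t-1}$ only, possibly randomly, but without seeing $\widetilde e_t\sim\mu_t$. So $y_t$ and $\widetilde e_t$ are conditionally independent given $\mathcal F_{t-1}$. Since the objective in \eqref{eq:opt} is linear in $y$, this gives
\[
\mathbb E\bigl[\widehat R_{\widetilde e_t,t-1}(y_t-\widetilde e_t)\mid\mathcal F_{t-1}\bigr]=\mathbb E_{y_t}\Bigl[\sum_{x}\mu_t(x)\widehat R_{x,t-1}(y_t-x)\Bigr]\le\beta_t .
\]

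\textbf{Step 3: bounding $\beta_t$.} This is the main step. The payoff $\sum_x\mu(x)\widehat R_{x,t-1}(y-x)$ is bilinear in $\mu$ and in a mixture $q\in[0,1]$ over $y\in\{0,1\}$, and both sets are compact and convex. By von Neumann's minimax theorem,
\[
\beta_t=\max_{q\in[0,1]}\min_{\mu}\sum_x\mu(x)\widehat R_{x,t-1}(q-x).
\]
For a given $q$, the forecaster responds with a point mass at a grid point $x\in\mathcal X$ with $|q-x|\le\eta$. This exists because $\mathcal X$ has spacing $\eta$ and $1-N\eta<\eta$. That response gives value at most $\eta|\widehat R_{x,t-1}|$. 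Every $\widehat R_{x,t-1}$ is either $0$ or some $R_{p,t-1}$ with $p\in\mathcal P_{t-1}$, so $|\widehat R_{x,t-1}|\le\sqrt{\Phi_{t-1}}$. Therefore $\beta_t\le\eta\sqrt{\Phi_{t-1}}$.

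\textbf{Step 4: recursion and induction.} Combining the steps and applying Jensen's inequality to $\sqrt{\cdot}$, the sequence $a_t:=\mathbb E[\Phi_t]$ satisfies
\[
a_t\le a_{t-1}+2\eta\sqrt{a_{t-1}}+1\le\bigl(\sqrt{a_{t-1}}+\eta\bigr)^2+1 .
\]
Set $b_t=\sqrt{a_t}$. Then $b_t^2\le(b_{t-1}+\eta)^2+1$. I will show by induction that $b_t\le\sqrt{t}+\eta t$. Write $u=b_{t-1}+\eta\le\sqrt{t-1}+\eta t$, which holds by the inductive hypothesis. Then
\[
u^2+1\le(\sqrt{t-1}+\eta t)^2+1\le(\sqrt t+\eta t)^2,
\]
because expanding gives $t-1+2\eta t\sqrt{t-1}+1\le t+2\eta t\sqrt t$. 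The base case is $a_0=0$. This yields $\mathbb E[\Phi_T]\le(\sqrt T+\eta T)^2$, and with $\eta=O(T^{-1/2})$ we get $\mathbb E[\Phi_T]=O(T)$.

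The main obstacle I expect is in Step 2: making sure the cross term really telescopes against $\beta_t$. This needs the identification $R_{p_t,t-1}=\widehat R_{\widetilde e_t,t-1}$, in particular that distinct $x$ mapping to the same $p$ share a residual. It also needs the adversary to be independent of the current internal randomness. Both follow from the determinism of $\operatorname{Predict}_{\mathrm{SPR}}$ given $S_t^{\mathrm{SPR}}$ and from the protocol in Section~\ref{sec:p_setting}.
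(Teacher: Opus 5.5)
Your proposal is correct and follows the paper's proof. It uses the same one-step expansion of the quadratic potential $\Phi_t$, the same von Neumann plus grid-rounding bound $\beta_t\le\eta\sqrt{\Phi_{t-1}}$, and the same Jensen step. The differences are minor: you close the recursion $a_t\le a_{t-1}+2\eta\sqrt{a_{t-1}}+1$ by a direct induction $\sqrt{a_t}\le\sqrt t+\eta t$ rather than the paper's maximum-and-quadratic-inequality argument, which gives the slightly cleaner constant $(\sqrt T+\eta T)^2$, and you spell out the identification $R_{p_t,t-1}=\widehat R_{\widetilde e_t,t-1}$ and the conditional independence of $y_t$ and $\widetilde e_t$ that the paper uses implicitly.
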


By Lemma~\ref{lemma:bdpht} and \eqref{eq:boundP1}, we have that $\mathbb{E}\left[\sum_{p\in \mathcal{P}_T} \left| R_{p,T} \right|\right]\leq O(\sqrt{T}\cdot 2^{(\tau-h)/4})$.

\paragraph{Putting all together.}
Recall that 
\(\eta = T^{-1/2}\). Let $\varepsilon_{\mathrm{AB}}$ be defined in Lemma~\ref{lemma:SPR3} and let $\gamma  = \frac{1-\varepsilon_{\mathrm{AB}} }{2- \varepsilon_{\mathrm{AB}}}$.
We then have that 
\begin{align}
& \mathbb E[\operatorname{CalErr}_T] \nonumber
\\ & \leq  O\!\left(
\sum_{i=1}^{\tau-h}
\sum_{j=i+1}^{i+h}
2^{j-i}
2^{\gamma(i+\tau-j)} + \sum_{i=1}^{\tau-h} \sum_{j\leq i+h}\min\{ 2^i ,2^{\tau-h-i}\}
\right) + O\left( \sqrt{T}\cdot 2^{(\tau-h)/4}\right) \nonumber
\\ &  \leq   O\left(\sum_{i=1}^{\tau-h} 2^{h(1-\gamma) +  \gamma\tau } + \log_2(T)\cdot  \sum_{i\leq \tau -h} \min\{2^i, 2^{\tau-h-i}\}  +\sqrt{T}\cdot 2^{(\tau-h)/4}  \right)  \nonumber
\\ & \leq O\left( \log_2(T) \cdot 2^{h+\gamma(\tau -h)} + \log_2(T) \cdot 2^{(\tau-h)/2}  + \sqrt{T} \cdot 2^{(\tau-h)/4}\right) \label{eq:exp1}
\\ & \leq O\!\left(\log_2(T)\cdot 
2^{h+\gamma(\tau-h)}+\log_2(T)\cdot 2^{3\tau/4-h/4}
\right).\label{eq:exp2}
\end{align}
Here, \eqref{eq:exp1} follows from the inequality \(j-i\le h\) and the fact that  $\sum_{i=1}^{\tau-h} \min\{ 2^i, 2^{\tau-h-i}\} = O(2^{(\tau-h)/2})$, while
\eqref{eq:exp2} follows from \((\tau-h)/2\le 3\tau/4-h/4\).

By choosing $h = \frac{3-4\gamma}{5-4\gamma}\tau = \frac{2+\varepsilon_{\mathrm{AB}}}{6-\varepsilon_{\mathrm{AB}}} \tau $, we have that $\mathbb E[\operatorname{CalErr}_T]\leq O\left(\log_2(T)\cdot  T^{\frac{2}{3} - \frac{\varepsilon_{\mathrm{AB}}}{18}}\right)$.

\subsection{Computational cost}
\label{sec:computational-cost}

We give a fully explicit implementation of Algorithm~\ref{alg:main}. The
purpose of the argument is only to establish a polynomial running time, so we
use direct scans and deep copies rather than more sophisticated persistent data
structures.
Let $K = \left\lceil 1/\eta \right\rceil$. Recall also that $
    \tau=\left\lceil\log_2T\right\rceil$,  $\overline T=2^\tau<2T$
 and   $h\le \tau-1$. 
 

\begin{lemma}
\label{lemma:computational-cost}
Algorithm~\ref{alg:main}, using the deterministic transition from
Section~\ref{sec:spralg} and the explicit \(A/B\) labeler, can be implemented
in time $
   O\!\left(T^{\frac{7}{2}}\log T\right)$.
\end{lemma}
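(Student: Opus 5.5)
The running time is a per-round count multiplied by $T$. In each round $t$, Algorithm~\ref{alg:main} performs four tasks. It evaluates $D_t(x)=\operatorname{Predict}_{\mathrm{SPR}}(S_t^{\mathrm{SPR}},t,x)$ for all $|\mathcal X|=N+1=O(K)=O(\sqrt T)$ grid points. It looks up $\widehat R_{x,t-1}$. It solves the minimax program~\eqref{eq:opt}. Finally, it commits one $\operatorname{Step}_{\mathrm{SPR}}$ and updates the residuals. The goal is to show that each of the $O(\sqrt T)$ trial evaluations costs $O(T^2\log T)$, via a deep copy of the needed state followed by a simulated step. This gives $O(T^{5/2}\log T)$ per round and $O(T^{7/2}\log T)$ in total. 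The remaining pieces are lower order.

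\textbf{Size of the local state.} Count the instances $G_{i,j,\ell}$: there are $(\tau-h)\cdot h\cdot 2=O(\log^2 T)$ of them. Each has $2^i\le \overline T<2T$ cells for $\operatorname{bias}_G$ and $\sigma_G$. The labeler $\mathsf L_G$ is a recursive $A/B$ tree over $[1,2^i]$. Its live objects form at most one root-to-leaf chain of active $B$-objects, each holding two child $A$-objects, so the stored tree has size $O(2^i)$ counters and pointers. Summing over $j$ for fixed $i$ gives $\sum_i h\,2^i=O(h\,2^{\tau-h})$. Since $h\ge1$, this is $O(T\log T)$ crudely and in fact $O(T)$ after using $h2^{-h}=O(1)$. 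The local state $S^{\mathrm{SPR}}_{\mathrm{local}}$ therefore has size $O(T)$, possibly up to a logarithmic factor.

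\textbf{Cost of one trial step.} One call to Algorithm~\ref{alg:dagan-spr-calibration} has two phases. The bias-removal phase scans, for each $(i,j)$, the cells of one instance, at cost $O(\sum_{i,j}2^i)=O(T)$. The bias-placement phase does the same and may call \texttt{simulateGame}, which scans $O(2^i)$ cells and performs one labeler query. That query descends the $A/B$ recursion of depth $O(\log T)$ and may reinitialize a subtree of size $O(2^i)$, so it costs $O(T)$. The reduced-transcript rollback in Definition~\ref{def:reduced-transcript} may require restoring a previously recorded local state. This is where the history $S_t^{\mathrm{SPR}}=\{S^{\mathrm{SPR}}_{\mathrm{local},s}\}_{s<t}$ enters: the rollback may pop up to $t\le T$ transcript entries, each needing a kill test and possibly a state restore of size $O(T)$. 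The worst case is therefore $O(T\cdot T)=O(T^2)$ per step, with an extra $\log T$ from the $O(\log^2 T)$ instance loop bounded more carefully, or from depth. Because $\operatorname{Predict}_{\mathrm{SPR}}$ must not alter the real state, each trial first deep-copies the local state at cost $O(T)$ and runs on the copy; the history is read-only. Each trial thus costs $O(T^2\log T)$, and the $O(\sqrt T)$ trials cost $O(T^{5/2}\log T)$. The committed step~\eqref{eq:update1} costs the same as one trial plus appending an $O(T)$ snapshot to the history. That snapshot adds $O(T)$ work per round and $O(T^2)$ memory overall.

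\textbf{Remaining pieces.} Maintain $R_{p,t}$ in a dictionary over $\mathcal P_t$, which has $|\mathcal P_t|\le t$ entries, so the lookups cost $O(\sqrt T\log T)$. The program~\eqref{eq:opt} is the minimax of a $|\mathcal X|\times 2$ zero-sum game. It reduces to minimizing $\max\{a(\mu),b(\mu)\}$ with $a(\mu)=\sum_x\mu(x)\widehat R_{x,t-1}(1-x)$ and $b(\mu)=-\sum_x\mu(x)\widehat R_{x,t-1}x$. An optimum is supported on at most two points, so it can be found by scanning all pairs in $O(K^2)=O(T)$ time. Alternatively, it can be solved as a two-constraint LP in $O(K)$ time by checking the pure strategies and the mixtures on the two-point supports. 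Sampling and the residual update cost $O(K)$. All of this is dominated by the trial steps.

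\textbf{Main obstacle.} The hardest part is the rollback cost: bounding honestly how many transcript entries one query can pop, and what restoring a state costs. I would argue amortization is not needed: the crude bound of $\le T$ pops, each with an $O(T)$ restore, already fits inside the $T^{7/2}$ budget. The real care is to ensure that the trial evaluations for different $x$ never mutate the shared history.
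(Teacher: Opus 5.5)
Your proposal is correct and reaches the same accounting as the paper: $O(T^2\log T)$ per preview, $O(T^{5/2}\log T)$ per round, and $O(T^{7/2}\log T)$ overall. The difference is where the dominant factor $T$ comes from.

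\textbf{Paper's route.} For every preview $x$, the paper deep-copies the entire state $S_t^{\mathrm{SPR}}$, including all $t-1$ recorded local snapshots. That copy costs $O(Th2^{\tau-h})$. The paper bounds the transition itself, including reduced-transcript construction and deletions, by only $O(\tau 2^{\tau-h})$. So the history copy is what produces its $O(T\tau2^{\tau-h})$ per-preview cost.

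\textbf{Your route.} You keep the history read-only and copy only the $O(h2^{\tau-h})=O(T)$ local state. You then charge the factor $T$ to a worst-case rollback: at most $T$ pops, each followed by an $O(T)$ restore.

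\textbf{What each route buys.} Your version uses only the trivial fact that a reduced transcript has no more entries than there have been calls. The paper instead asserts, without justification, that constructing reduced transcripts and performing deletions costs $O(2^i)$. That is not an evident bound. The reduction only pops trailing killed entries, so an entry can remain in the reduced transcript after its sign has been erased from the board by a non-adjacent kill. The paper's own bound on reduced length, $2^{\tau-j+1}$, can also exceed $2^i$. The paper's total survives anyway, because its history copy dominates any rollback cost up to $O(T2^{\tau-h})$.

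Your version also shows that the $T^{7/2}$ bound has slack. The kill test does not depend on the restored labeler state, so one can pop all killed entries first and restore once. That would make each preview $O(T)$. The paper's monolithic deep copy buys simplicity: it needs no argument about which parts of the state a preview may mutate.

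\textbf{Two minor points.} First, with eager initialization every internal $A$-object holds a live $B$-object. The labeler is therefore a full binary tree with $\Theta(2^i)$ nodes, not a single root-to-leaf chain; your $O(2^i)$ conclusion is still right. Second, the extra $\log T$ you attach to the rollback bound is unnecessary, since each step calls \texttt{simulateGame} on at most one instance. It is harmless as an upper bound.
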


\begin{proof}
We prove by bounding  the cost of
one deterministic \textsc{SPR} transition and the additional work performed by the outer wrapper.

\paragraph{Size of the \textsc{SPR} state.}
For every $
    i\in\mathcal I_h=\{1,\ldots,\tau-h\}$, $
    j\in\mathcal J_i=\{i+1,\ldots,i+h\}$ and $
    \ell\in\{0,1\}$, 
the algorithm maintains a board \(G_{i,j,\ell}\) with \(2^i\) cells. Its bias
array and sign array use \(O(2^i)\) words.

The live state of the explicit \(A/B\) labeler also uses \(O(2^i)\) words.
Indeed, its recursively stored objects form a binary tree with \(O(2^i)\)
nodes, and every node stores only a constant number of counters, pointers,
phase variables, and integer parameters. When a labeler object is restarted,
the old object is discarded, so obsolete versions are not retained.

Thus, one board requires \(O(2^i)\) words, and the local \textsc{SPR} state has size
\begin{align*}
    \sum_{i=1}^{\tau-h}
    \sum_{j=i+1}^{i+h}
    \sum_{\ell\in\{0,1\}}
    O(2^i)
    &=
    O\!\left(
        h\sum_{i=1}^{\tau-h}2^i
    \right)=
    O\!\left(
        h2^{\tau-h}
    \right).
\end{align*}
Taking the historical recording of the local \textsc{SPR} states into consideration, the complete \textsc{SPR} state has size $O(Th 2^{\tau-h})$. 
The same bound applies to the one-time initialization cost and to the cost of
making a deep copy of the state.

\paragraph{Cost of one \textsc{SPR} transition.}
Consider $
    \operatorname{Step}_{\mathrm{SPR}}(S,t,z)$
for a fixed input \(z\in[0,1]\).
In the bias-removal phase, for each pair \((i,j)\), the parity of \(z\)
selects one of the two boards \(G_{i,j,0}\) and \(G_{i,j,1}\). A direct scan
of that board finds an admissible cell in \(O(2^i)\) time if one exists.
Hence a complete bias-removal scan costs $
    O\!\left(
        h\sum_{i=1}^{\tau-h}2^i
    \right)
    =
    O\!\left(
        h2^{\tau-h}
    \right).$
The bias-placement phase performs only \(O(h\tau)\) constant-time
current-cell checks, except that it may make one call to
\texttt{simulateGame}.
Suppose that \texttt{simulateGame} is called on a board with \(2^i\) cells.
Scanning the sign array, constructing the reduced transcripts, and  carrying out all else legal deletions costs \(O(2^i)\), 

It remains to bound the cost of the \(A/B\) label query.
A call to the recursive \(A/B\) procedure follows a root-to-leaf path of depth
at most \(i\). At each depth, the corresponding \(A\)-object can restart its
current \(B\)-object at most once before returning a sign. Eagerly initializing
a \(B\)-object on an interval of length at most \(2^i\) costs \(O(2^i)\).
Therefore, the deliberately coarse bound $
    O(i2^i)
    \le
    O\!\left(
        \tau2^{\tau-h}
    \right)$
holds for one label query.

Combining the bias-removal phase, the bias-placement phase, and the possible
game simulation gives that the time cost of $\operatorname{Step}_{\mathrm{SPR}}(S,t,z)$ is bounded by $O(\tau 2^{\tau-h})$.

\paragraph{Computing all preview predictions.}
For each \(x\in\mathcal X\), the
algorithm:

\begin{enumerate}
    \item makes a deep copy of \(S_t^{\mathrm{SPR}}\);
    \item evaluates $
        \operatorname{Step}_{\mathrm{SPR}}
        (S_t^{\mathrm{SPR}},t,x)$
    on the scratch copy;
    \item records only the resulting prediction \(D_t(x)\); and
    \item discards the scratch copy.
\end{enumerate}

Consequently, computing all \(K\) values \(D_t(x)\) costs $
    O\!\left(
        TK\tau2^{\tau-h}
    \right)$
time. Because the scratch state is reused, this requires only one scratch copy,
rather than \(K\) simultaneous copies.

\paragraph{Solving the finite minimax problem.} The minimax optimization problem~\eqref{eq:opt} has $K$ variables, corresponding
to the distribution over the $K$ grid points, and only two constraints,
corresponding to the two possible outcomes \(y=0\) and \(y=1\). Therefore,  it can be
solved within $O(K^2)$ time.

\paragraph{Putting all together.}
After sampling \(\widetilde e_t\), the algorithm commits exactly one SPR
transition:
$$
    (p_t,S_{t+1}^{\mathrm{SPR}})
    =
    \operatorname{Step}_{\mathrm{SPR}}
    (S_t^{\mathrm{SPR}},t,\widetilde e_t).
$$ 
This costs $
    O\!\left(
        \tau2^{\tau-h}
    \right)$. 
Therefore, the total cost of round \(t\) is $
    O\!\left(
       TK\tau2^{\tau-h}+K^2
    \right).$
Summing over the $T$ rounds and including the one-time initialization cost
gives $
    O\!\left(
        T^2K\tau2^{\tau-h}+TK^2
    \right).$ 
Using $
    K=O(\sqrt T)$, $
    \tau=O(\log T)$,
    and $
    2^{\tau-h}\le 2^\tau<2T$,
we obtain
\begin{align}
   O\left( T^2K\tau2^{\tau-h}+TK^2\right) = 
    O\!\left(
        T^{\frac{7}{2}}\log T
    \right).\nonumber
\end{align}

\end{proof}

\subsection{Proof of Lemma~\ref{lemma:bdpht}}\label{sec:missing_proof}

\textbf{Lemma~\ref{lemma:bdpht}} (restatement). \emph{$\mathbb{E}[\Phi_T] = O\!\left((\sqrt T+\eta T)^2\right)$. 
If $\eta = O( T^{-1/2})$, then $\mathbb{E}[\Phi_T] = O(T)$.}

\begin{proof}
Recall that
$\mathcal{X}$ is the grid with interval length $\eta$. 
At round $t$, the wrapper chooses a distribution $\mu_t\in\Delta(\mathcal X)$
by solving the finite minimax problem \eqref{eq:opt} as
\begin{align}
\beta_t
=
\min_{\mu\in\Delta(\mathcal X)}
\max_{y\in\{0,1\}}
\sum_{x\in\mathcal X}
\mu(x)\widehat{R}_{x, t-1}(y-x).\nonumber
\end{align}

By Von Neumann's minimax theorem 
\begin{align}
\beta_t
 & =
\max_{q\in\Delta(\{0,1\})}
\min_{x\in\mathcal X}
\mathbb{E}_{y\sim q}\left[\widehat{R}_{x, t-1}(y-x)\right] \nonumber
\\ & = \max_{q\in\Delta(\{0,1\})}
\min_{x\in\mathcal X}
\widehat{R}_{x, t-1}(\mathbb{E}_{y\sim q}[y]-x) \nonumber
\\ &= \max_{y\in [0,1]}
\min_{x\in\mathcal X}
\widehat{R}_{x, t-1}(y-x). \label{eq:optdual}
\end{align}
Fix any $y\in[0,1]$, and let $
x^*(y)\in\arg\min_{x\in\mathcal X}|y-x|$.
Then $
|y-x^*(y)|\le\eta.$
Therefore,
\[
\min_{x\in \mathcal{X}}\widehat{R}_{x, t-1}(y-x)
\le
\left|\widehat{R}_{x^*(y), t-1}\right|
\eta.
\]
Since $
\left|\widehat{R}_{x^*(y), t-1}\right|
\le
\sqrt{\Phi_{t-1}}$,
we have $
\min_{x\in\mathcal X}
\widehat{R}_{x^*(y), t-1}(y-x)
\le
\eta\sqrt{\Phi_{t-1}}.$ 
Taking the maximum over $y\in[0,1]$ gives $
\beta_t\le \eta\sqrt{\Phi_{t-1}}.$
Since $\mu_t$ is optimal for the finite minimax problem, for both
$y\in\{0,1\}$, $
\mathbb E_{x\sim\mu_t}
\left[
\widehat{R}_{x, t-1}(y-x)
\right]
\le
\beta_t
\le
\eta\sqrt{\Phi_{t-1}}$. 
Recalling that $\widehat{R}_{x,t-1} = R_{D_t(x), t-1}$ for all $x\in \mathcal{X}$, we always have 
\begin{align}
\mathbb E_{x\sim\mu_t}
\left[
R_{D_t(x), t-1}(y_t-x)
\right]
\le
\beta_t
\le
\eta\sqrt{\Phi_{t-1}}.\label{eq:us1}
\end{align}

Let $\mathcal{F}_{t}$ denote the $\sigma$-field after the $t$-th round. So the adversary chooses $y_t$ (or its distribution) conditioned on $\mathcal{F}_t$.
We then have that 
\begin{align}
\mathbb E[\Phi_t-\Phi_{t-1}\mid\mathcal F_{t-1}] = \mathbb{E}_{x\sim \mu_t}[2R_{D_t(x), t-1}(y_t-x)+(y_t-x)^2 \mid \mathcal{F}_{t-1}]\leq 2\eta \sqrt{\Phi_{t-1}} + 1,\nonumber
\end{align}
which implies
\begin{align}
\mathbb{E}[\Phi_t] - \mathbb{E}[\Phi_{t-1}]\leq 2\eta \mathbb{E}[\sqrt{\Phi_{t-1}}] + 1.\nonumber
\end{align}
Let $
Z_t=\mathbb E[\Phi_t]$ for $t\geq 0$.
By Jensen's inequality, $
\mathbb E[\sqrt{\Phi_{t-1}}]\le \sqrt{Z_{t-1}}.$
Therefore,
\[
Z_t
\le
Z_{t-1}+1+2\eta\sqrt{Z_{t-1}}.
\]
Summing over $t$ yields $
Z_T
\le
T+2\eta\sum_{t=0}^{T-1}\sqrt{Z_t}$.
Let $
M_T=\max_{0\le t\le T}\sqrt{Z_t}.$
Then $
M_T^2
\le
T+2\eta T M_T.$
Solving this quadratic inequality gives
\[
M_T
\le
\eta T+\sqrt{\eta^2T^2+T}
\le
2\eta T+\sqrt T.
\]
Hence
\[Z_T
\le
M_T^2
\le
O\!\left((\sqrt T+\eta T)^2\right).
\]
If $\eta\le T^{-1/2}$, then $
\sqrt T+\eta T=O(\sqrt T)$,
and therefore $
\mathbb E[\Phi_T]=O(T)$.
\end{proof}

%% file: discussion.tex
\section{Discussion}\label{sec:discussion}

In this paper, we develop an efficient algorithm that achieves
\(O(T^{2/3-\varepsilon})\) calibration error. Our algorithm is based on a simple
combination of the \textsc{SPR-Calibration} algorithm in \cite{dagan2024breaking} and a Blackwell-approachability correction
argument. The resulting improvement exponent \(\varepsilon\) is inherited from
the \textsc{SPR} guarantee in~\cite{dagan2024breaking}; in particular, any improvement in
the value of the underlying \textsc{SPR} game would translate directly into a stronger efficient calibration bound.